\newtheorem{theorem}{Theorem}
\title{EVLM: Self-Reflective Multimodal Reasoning and KTO Alignment for Cross-Dimensional Visual Editing}
\author{
    Umar Khalid\textsuperscript{\rm 1*},
    Kashif Munir\textsuperscript{\rm 1*},
    Hasan Iqbal\textsuperscript{\rm 2},
    Azib Farooq\textsuperscript{\rm 3},
    Jing Hua\textsuperscript{\rm 2},
    Nazanin Rahnavard\textsuperscript{\rm 4},
    Chen Chen\textsuperscript{\rm 4},
    Victor Zhu\textsuperscript{\rm 1},
    Zhengping Ji\textsuperscript{\rm 1}
}
\begin{document}
\maketitle
\begingroup
\renewcommand\thefootnote{}
\setlength{\parindent}{0pt}
\footnotetext{*Corresponding authors.}
\addtocounter{footnote}{-1}
\endgroup

\begin{abstract}
Editing complex visual content from ambiguous or partially specified instructions remains a core challenge in vision–language modeling. 
Existing models can contextualize content but often fail to infer the \textbf{\underline{underlying intent}} within a reference image or scene, leading to inconsistent or misaligned edits. 
We introduce the Editing Vision–Language Model (EVLM), a system that interprets ambiguous instructions in conjunction with reference visuals to produce precise, context-aware editing prompts. 
EVLM’s key innovation is a reflective reasoning framework that translates subjective user intent into structured, actionable outputs by aligning with human-rated rationales through Reflection-Aware KL-Divergence Target Optimization (RKTO). 
By combining Chain-of-Thought (CoT) reasoning with RKTO alignment, EVLM captures fine-grained editing preferences without relying on binary supervision. 
Trained on a dataset of 30,000 CoT examples with human-annotated rationale quality, EVLM achieves substantial gains in alignment with human intent. 
Experiments across image, video, 3D, and 4D editing tasks show that EVLM generates coherent and high-quality instructions, providing a scalable foundation for multimodal editing and reasoning.
\end{abstract}

\section{Introduction}

Recent advances in text-to-image (T2I) diffusion models have enabled free-form
natural language to be transformed into photorealistic imagery with striking
fidelity~\cite{ramesh2021zero,ho2022imagen,rombach2022high}. Building on this
progress, instruction-based editors—\emph{“modify this image according to that
sentence”}—have become a natural interface for visual content
creation~\cite{brooks2023instructpix2pix,kawar2023imagic,zhang2023adding}.
However, as editing tasks become more open-ended and multimodal—combining vague
language with incomplete visual cues—current systems often fail to interpret
user intent. This raises a central challenge for multimodal reasoning:
\emph{can a model reason through ambiguity, reconciling partial textual hints
with reference visuals to infer the user’s intended edit?}

\begin{figure}[t]
    \centering
    \includegraphics[width=0.9\columnwidth]{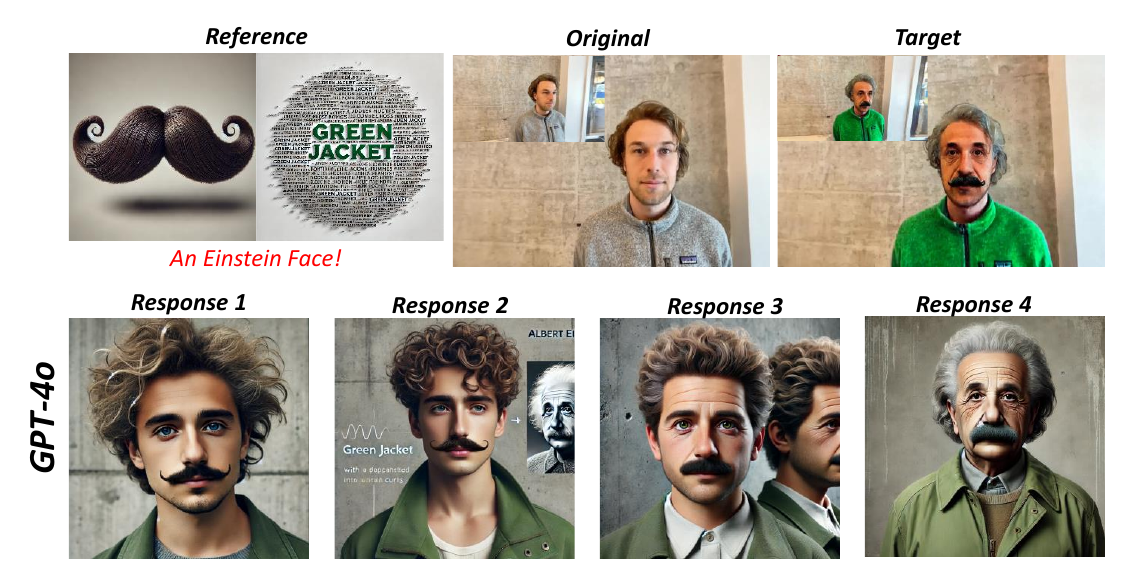}
    \caption{\footnotesize Reference image and prompt for the 3D editing task:
    "An Einstein Face!" The reference includes an image with a mustache and the
    image-with-text "Green Jacket." These were provided to GPT-4o, along with
    supporting prompts (details in {\textit{supplementary}}), to guide the
    generation of accurate editing instructions. GPT-4o encountered challenges
    integrating textual, visual, and OCR information to produce coherent
    instructions. Despite multiple attempts, DALL-E 3 guided by GPT-4o was
    unable to generate the desired edited image that fully aligns with the
    \textbf{\textit{reference}} \textbf{\textit{intent.}}}
    \label{fig:GPT}
\end{figure}

\begin{figure*}[t]
    \centering
    \includegraphics[width=0.90\textwidth]{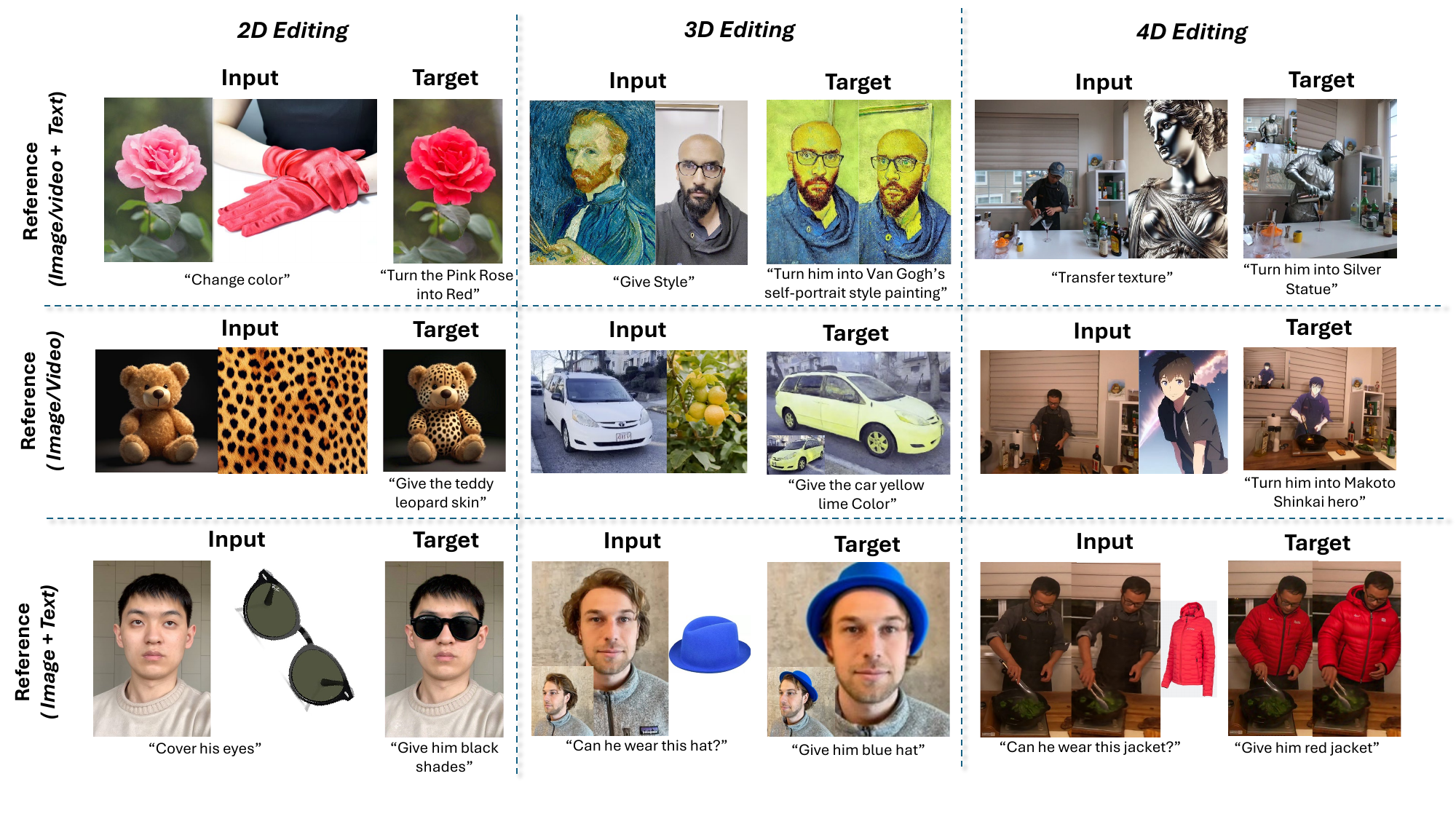}
    \vspace{-3mm}
    \caption{\footnotesize
    \textbf{EVLM enables editing across 2D, 3D, and 4D tasks.}
    Given a reference image, video, or text instruction, EVLM generates precise
    and context-aware editing transformations. Examples include color and style
    modifications in 2D and 3D, and texture or dynamic edits in 4D scenarios.
    These results highlight EVLM's multimodal understanding of spatial,
    temporal, and semantic cues for complex visual editing.}
    \vspace{-3mm}
    \label{fig:teaser}
\end{figure*}

\paragraph{Motivation.}
Prompt-refinement pipelines that attach a large language model to a diffusion
decoder~\cite{ge2024seed,zhao2024ultraedit} perform well when instructions are
explicit but lack \emph{reflective reasoning}. Given an instruction such as
``\emph{Give it an Einstein face!}'' with only weak visual hints
(Fig.~\ref{fig:GPT}), these systems cannot justify \emph{why} a specific region
—such as the jacket—should inherit a pattern or texture. Two factors contribute
to this limitation: (i) existing datasets contain only terse text–edit pairs,
providing no supervision for reasoning, and (ii) preference optimization with
PPO~\cite{schulman2017proximal} captures only coarse binary rewards, which are
unsuited for subjective, multi-solution editing tasks where subtle human
preferences matter.

We introduce the \textbf{Editing Vision–Language Model (EVLM)}, a multimodal
reasoning framework that interprets ambiguous editing instructions by combining
textual, visual, and spatial cues. EVLM ingests diverse references—such as
images, video clips, depth maps, or text—and outputs concise, disambiguated
instructions together with target masks or object indices suitable for
downstream visual editors.

Training proceeds in two complementary stages:
(1) construction of the \textbf{\textsc{Reflective-Edit}} dataset of 30k
multimodal examples, where GPT-4o generates chain-of-thought rationales that
are rated by human annotators as \emph{desired} or \emph{non-desired}; and
(2) alternating phases of \emph{Reflective Supervised Fine-Tuning} (SFT) and
\textbf{Reflection-Aware KL-Divergence Target Optimization (RKTO)}, which align
both the final instructions and the reflective reasoning process with human
preferences. Together, these components enable EVLM not only to imitate
editing instructions but also to reason reflectively about user intent with
interpretable internal logic as illustrated in
Fig.~\ref{fig:teaser}.
Our main contributions are:
\begin{enumerate}
    \item \textbf{EVLM}, a vision–language model capable of reflective
    multimodal reasoning for context-aware and interpretable editing across
    image, video, 3D, and 4D domains.
    \item \textbf{\textsc{Reflective-Edit}}, a 30k-example chain-of-thought
    dataset with human preference annotations designed to teach reflective
    reasoning for editing tasks.
    \item \textbf{RKTO}, a preference-alignment framework that extends
    KL-divergence target optimization to jointly align instruction
    effectiveness and reflection quality, providing richer and more stable
    feedback than PPO-based methods.
\end{enumerate}

% Together, these components bridge the gap between ambiguous user intent and
% precise, editable instructions, enabling EVLM to perform robust, human-aligned
% visual edits across diverse modalities, as illustrated in
% Fig.~\ref{fig:teaser}.

\section{Related Work}
 % compact spacing after section title

\footnotesize % shrink text just for this section
\setlength{\parskip}{2pt} % tighten paragraph spacing

\paragraph{Reflection and Alignment in Multimodal Models.}
Recent work explores reflection in language and vision-language models to enable post-hoc self-correction and alignment with human intent.
Methods use either external feedback (\textit{e.g.}, execution traces, expert critiques)~\cite{NEURIPS2023_1b44b878, chen2024teaching} or internal self-evaluation~\cite{madaan2024self, li-etal-2024-hindsight, weng2023large}, though reliability remains task-dependent~\cite{huang2023large}.
Parallel efforts in vision-language models apply chain-of-thought reasoning to domains such as math~\cite{lu2023mathvista, wang2024measuring}, scientific QA~\cite{lu2022learn}, and chart understanding~\cite{zhang2024tinychart}.
Preference-alignment approaches like DPO~\cite{ouali2024clip, sun2023aligning} and PPO~\cite{yu2024rlaif} guide model fine-tuning, while iterative DPO~\cite{sun2024easy, pang2024iterative} improves step-by-step reasoning.
More recently, KTO~\cite{ethayarajh2024kto} aligns model outputs to a soft reference distribution via KL-divergence, effectively modeling nuanced human preferences for subjective generation tasks such as visual editing.

\begin{figure*}[t]
    \centering
    \includegraphics[width=0.90\textwidth]{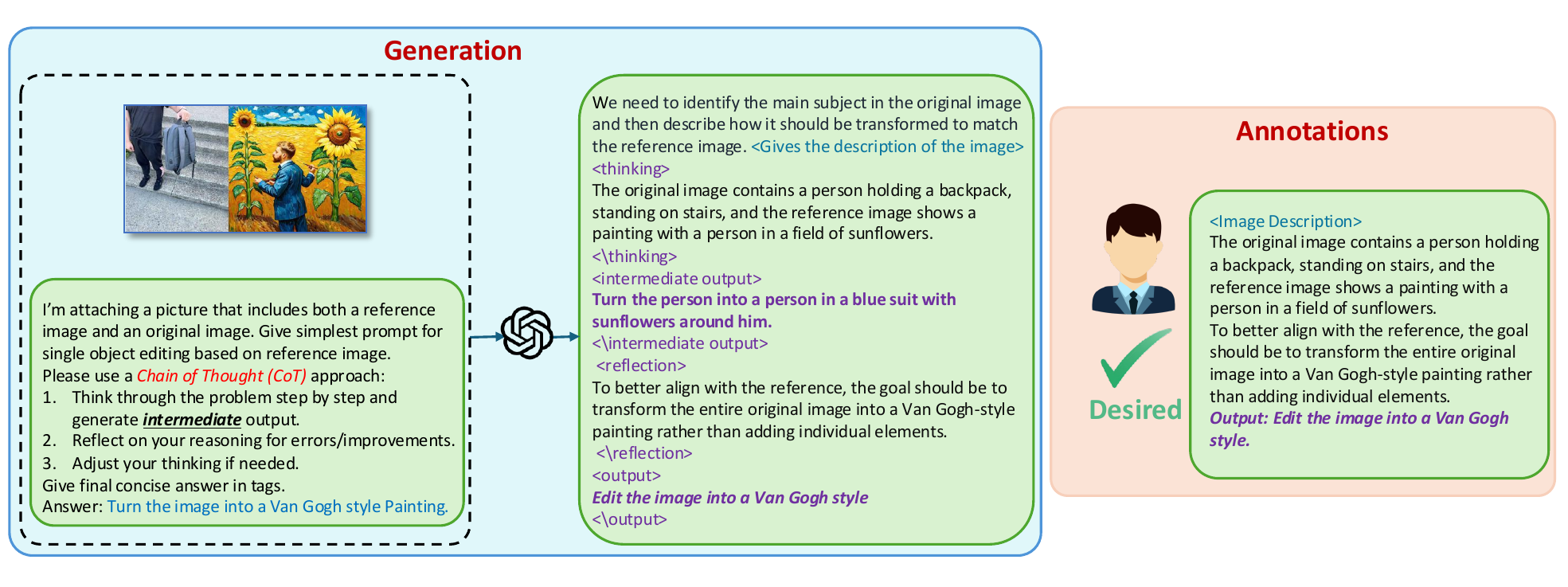}
    \vspace{-3mm}
    \caption{\footnotesize
    \textbf{Overview of our data preparation pipeline.}
    Given a reference and target image, GPT-4o produces a structured chain-of-thought rationale through initial, intermediate, and reflective reasoning.
    Only the reflective and final outputs are used to construct RKTO training data, with human annotators providing “desired” labels when reasoning aligns with intended edits.}
    \label{fig:EVLm-Pipeline}
    \vspace{-2mm}
\end{figure*}

\paragraph{Diffusion-Based Editing Across Dimensions.}
Diffusion-based models enable text-driven editing across modalities, spanning 2D~\cite{ruiz2023dreambooth, pnp, p2p, sdedit, brooks2023instructpix2pix}, 3D~\cite{kamata2023instruct, instructnerf2023, dong2024vica}, and 4D~\cite{zhuang2023dreameditor, shao2023control4d} domains.
Extensions to videos include Tune-A-Video~\cite{wu2023tune}, Make-A-Video~\cite{singer2023makeavideo}, and MagicVideo~\cite{zhou2022magicvideo}, which leverage spatio-temporal attention to maintain consistency.
Prompt-level control methods such as Prompt-to-Prompt~\cite{p2p} and Plug-and-Play~\cite{pnp} refine local edits, while 3D and 4D variants adapt text-to-image priors (e.g., IP2P) for NeRFs and dynamic scenes.
Our work builds on these foundations by integrating reflective reasoning and alignment into the diffusion pipeline, yielding flexible editing across 2D, 3D, and 4D modalities.
% \paragraph{Diffusion-Based Editing Across Dimensions.}
% Diffusion-based models have enabled powerful text-driven image editing~\cite{ruiz2023dreambooth, pnp, p2p, sdedit, brooks2023instructpix2pix}, and have been extended to videos through methods like Tune-A-Video~\cite{wu2023tune}, Make-A-Video~\cite{singer2023makeavideo}, and MagicVideo~\cite{zhou2022magicvideo}. These approaches adapt text-to-image (T2I) models using spatio-temporal attention to preserve temporal coherence. For localized edits, techniques such as Prompt-to-Prompt~\cite{p2p} and Plug-and-Play~\cite{pnp} adapt generative diffusion for finer control. In 3D, models like Instruct 3D-to-3D~\cite{kamata2023instruct}, Instruct-NeRF2NeRF~\cite{instructnerf2023}, and ViCA-NeRF~\cite{dong2024vica} use T2I priors like IP2P to transform NeRFs with instruction-based editing. DreamEditor~\cite{zhuang2023dreameditor} and Control4D~\cite{shao2023control4d} extend these ideas to mesh optimization and 4D GANs, respectively, addressing the challenges of view consistency and temporal supervision. Our work draws on these foundations while integrating reasoning and alignment into the editing pipeline, enabling cross-modal understanding and flexible editing across 2D, 3D, and 4D settings.
\normalsize % return to normal text size for next sections
\section{Approach}
\label{sec:approach}

The Editing Vision--Language Model (EVLM) converts multimodal editing intent—
including images, video, and text—into concise and interpretable editing
instructions accompanied by spatial masks. 
Training proceeds in three stages:
(1) construction of a reflective dataset using GPT-4o,
(2) supervised fine-tuning (SFT) of a Qwen2-VL-7B backbone for structured
reasoning, and
(3) reflection-aware KL-Divergence Target Optimization (RKTO) to align both
the generated instructions and the underlying reflective reasoning with human
preferences. (Notations and full derivations are provided in the Supplementary.)

\subsection{\textsc{Reflective-Edit} Dataset}
We construct the \textsc{Reflective-Edit} dataset containing approximately
30{,}000 multimodal examples. Each example contains an input, one or more
reference items (image, video, or text), and a GPT-4o-generated chain-of-thought
(CoT) trace segmented into \texttt{<thinking>}, \texttt{<intermediate>},
\texttt{<reflection>}, and \texttt{<output>} sections. Human annotators label each trace
as \emph{desired} or \emph{non-desired}, providing graded preference supervision
for both instruction quality and reflective reasoning. Figure~\ref{fig:EVLm-Pipeline}
summarizes the data-preparation pipeline, including CoT generation and human annotation.

\subsection{Supervised Fine-Tuning (SFT)}
EVLM first learns to reproduce GPT-4o CoT traces using teacher forcing. For a
tuple $(\mathcal{V},u,y_{1:T})$, where $\mathcal{V}$ denotes the visual
context, $u$ the textual prompt, and $y_{1:T}$ the target token sequence
(which may include mask tokens yielding a discrete mask $\hat m$), the
SFT objective is
\[
\mathcal{L}_{\mathrm{SFT}}
 = -\sum_{t=1}^{T}\log p_\theta(y_t\mid y_{<t},u,\mathcal{V}),
\]
which stabilizes token-level generation and provides the reference snapshot
$\rho_{\mathrm{ref}}$ used in subsequent alignment. When available, the dataset
also provides a reference mask $m_{\mathrm{ref}}$ for downstream evaluation.

\subsection{Reflection-Aware KTO (RKTO)}
We extend KL-Divergence Target Optimization (KTO) by incorporating a reflection
reward that encourages concise, consistent reflective reasoning in addition to
instruction fidelity. For a tuple $(\mathcal{V},u,y_{\mathrm{pref}},m_{\mathrm{ref}})$,
the per-sample RKTO objective is
\begin{equation}
\begin{split}
\mathcal{L}_{\mathrm{RKTO}} = \mathbb{E}\Big[\, &
  w(s_\phi-\eta_0)\,
    R_{\mathrm{eff}}(\hat y,\hat m; y_{\mathrm{pref}}, m_{\mathrm{ref}})\\
&\qquad\qquad\qquad
  +\;\lambda_{\mathrm{ref}}\,R_{\mathrm{reflect}}(r_{\mathrm{refl}})\,\Big],
\end{split}
\label{eq:rkto_loss}
\end{equation}
where
\begin{equation}
\begin{split}
s_\phi &= 
\log\!\frac{\rho_\phi(y_{\mathrm{pref}}\mid u,\mathcal{V})}
              {\rho_{\mathrm{ref}}(y_{\mathrm{pref}}\mid u,\mathcal{V})},\\[3pt]
\eta_0 &= 
\mathrm{KL}\!\Big(
\rho_\phi(\cdot\mid u,\mathcal{V})
\;\Big\|\;
\rho_{\mathrm{ref}}(\cdot\mid u,\mathcal{V})
\Big).
\end{split}
\end{equation}

Here we clarify key terms used above: \(\rho_{\mathrm{pref}}\) denotes the empirical human-preferred distribution estimated from annotator labels; \(\hat m\) denotes the model-decoded mask (from mask tokens) and \(m_{\mathrm{ref}}\) the dataset reference mask; IoU is non-differentiable and is handled via REINFORCE with a batch baseline; \(w(\cdot)\) is \(\mathrm{softplus}\) followed by clipping to \([0,w_{\max}]\); cosine similarities are rescaled to \([0,1]\) by \(\widetilde{\cos}(a,b)=(1+\cos(a,b))/2\); and \(r_{\mathrm{refl}}\) denotes the reflection segment from the CoT trace.  

\begin{figure*}
    \centering
    \includegraphics[width=0.9\linewidth]{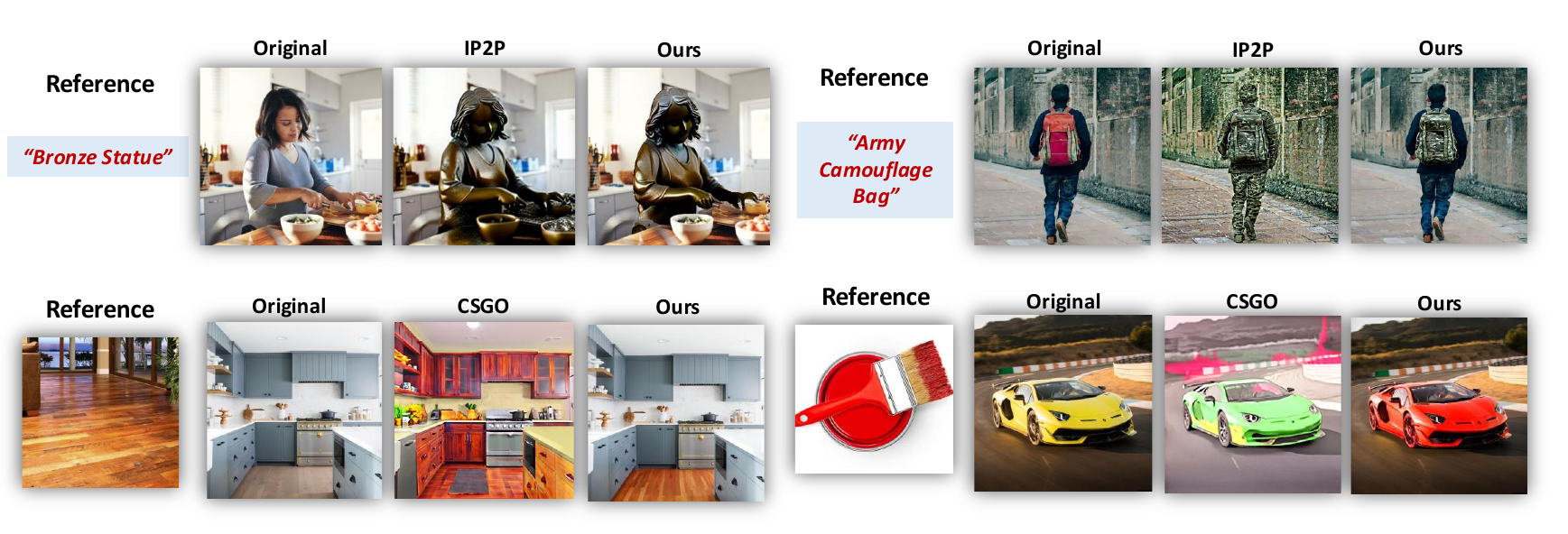}
    \caption{\textbf{Image Editing Results Using Reference Images or Text Prompts}. The first row demonstrates EVLM's ability to refine vague textual prompts into precise editing instructions and to produce masks for targeted edits. The second row compares EVLM to an image-based editing baseline, illustrating improved alignment to reference-guided transformations.}
    \label{fig:image_results}
    \vspace{-5mm}
\end{figure*}

\paragraph{Instruction Effectiveness.}
When a reference mask \(m_{\mathrm{ref}}\) is available, the instruction-effectiveness
reward combines semantic and spatial alignment:
\begin{equation}
\begin{split}
R_{\mathrm{eff}}(\hat y,\hat m; y_{\mathrm{pref}}, m_{\mathrm{ref}})=\;&
\alpha\,\widetilde{\cos}\!\big(e(\hat y),e(y_{\mathrm{pref}})\big)\\
&+(1-\alpha)\,\mathrm{IoU}(\hat m,m_{\mathrm{ref}}),
\end{split}
\end{equation}
where \(e(\cdot)\) is the Qwen2-VL encoder, 
\(\widetilde{\cos}(a,b)=(1+\cos(a,b))/2\) rescales cosine similarity to \([0,1]\),
and \(\mathrm{IoU}\) denotes intersection-over-union between predicted and
reference masks. IoU is non-differentiable and is optimized via REINFORCE
with a batch baseline.

\paragraph{Reflection Reward.}
The reflection-quality reward is
\[
\begin{aligned}
R_{\mathrm{reflect}}(r_{\mathrm{refl}}) =\;&
\beta_1\,\widetilde{\cos}\!\big(e(r_{\mathrm{refl}}),e(y_{\mathrm{pref}})\big)\\
&+\beta_2\,\exp\!\big(-\gamma\,\mathrm{len}(r_{\mathrm{refl}})\big)\\
&+\beta_3\big(1-\mathrm{KL}(p_{\mathrm{int}}\|p_{\mathrm{refl}})\big),
\end{aligned}
\]
where \(p_{\mathrm{int}}\) and \(p_{\mathrm{refl}}\) are predictive token distributions
for the \texttt{<intermediate>} and \texttt{<reflection>} segments, and the \(\beta\)
coefficients sum to one. The reflection loss is \(\mathcal{L}_{\mathrm{reflect}}=1-R_{\mathrm{reflect}}\).

\subsection{Gradient Estimation and Optimization}
\label{sec:grad_est}

RKTO computes gradients at the sequence level, combining differentiable
token-likelihood terms with policy gradients for non-differentiable components
such as IoU. For a batch of $B$ preference examples, the weighted objective
involves normalized log-ratio and baseline statistics given by
\begin{equation}
\begin{split}
s_\phi &=
\log\!\frac{\rho_\phi(y_{\mathrm{pref}}\mid u,\mathcal{V})}
          {\rho_{\mathrm{ref}}(y_{\mathrm{pref}}\mid u,\mathcal{V})},\\[2pt]
\eta_0 &=
\mathrm{KL}\!\Big(\rho_\phi(\cdot\mid u,\mathcal{V})
\;\Big\|\;
\rho_{\mathrm{ref}}(\cdot\mid u,\mathcal{V})\Big).
\end{split}
\end{equation}

The instruction-effectiveness reward, which combines semantic and spatial
consistency, is formulated as
\begin{equation}
\begin{split}
R_{\mathrm{eff}}(\hat y,\hat m; y_{\mathrm{pref}},m_{\mathrm{ref}})=&
\;\alpha\,\widetilde{\cos}\!\big(e(\hat y),e(y_{\mathrm{pref}})\big)\\
&+(1-\alpha)\,\mathrm{IoU}(\hat m,m_{\mathrm{ref}}),
\end{split}
\end{equation}
where $\widetilde{\cos}(a,b)=(1+\cos(a,b))/2$ rescales the cosine similarity
to $[0,1]$, $e(\cdot)$ is the text-encoder embedding, and $\mathrm{IoU}$ is the
intersection-over-union between predicted and reference masks.

For conciseness, let
\[
c_i=w(\hat s_\phi^{(i)}-\hat\eta_0), \qquad
d_i=c_i(1-\alpha)(\mathrm{IoU}^{(i)}-b_{\mathrm{IoU}}),
\]
where $b_{\mathrm{IoU}}$ denotes a batch baseline.
The resulting gradient estimator can then be expressed as
\begin{equation}
\label{eq:grad_est_final}
\begin{aligned}
\nabla_\phi\mathcal{L}_{\mathrm{RKTO}}
\approx\;&\frac{1}{B}\sum_{i=1}^{B} c_i\,R_{\mathrm{eff}}^{(i)}\,
  \nabla_\phi\log\rho_\phi\!\big(y_{\mathrm{pref}}^{(i)}\mid x^{(i)}\big)\\
&+\lambda_{\mathrm{ref}}\nabla_\phi\mathcal{L}_{\mathrm{reflect}}\\
&+\frac{1}{B}\sum_{i=1}^{B} d_i\,
  \nabla_\phi\log\rho_\phi\!\big(\hat m^{(i)}\mid x^{(i)}\big).
\end{aligned}
\end{equation}

The first term represents the importance-weighted policy gradient for
the preferred textual response, the second corresponds to the differentiable
reflection component, and the third applies a REINFORCE correction for the
non-differentiable IoU reward on discrete masks.  
Variance is reduced through clipping of $w(\cdot)$ to $[0,w_{\max}]$,
centering of rewards using batch baselines, and a smaller learning rate for
RKTO relative to SFT.

SFT and RKTO are alternated during training. 
SFT provides a stable reference policy and structured reasoning capacity, 
while RKTO progressively aligns the model with human-preferred outputs and reflective reasoning. 
Convergence is observed once validation accuracy and reflection-quality metrics stabilize.

Optimizing Eq.~\ref{eq:rkto_loss} jointly reduces the divergence between the model 
and human-preferred distributions for both outputs and reflections. 
Under standard regularity conditions—bounded rewards, monotonic $w(\cdot)$, 
and sufficiently small learning steps—gradient updates that minimize the RKTO objective 
also decrease a composite KL divergence between model and preference distributions 
(see Supplementary for the formal theorem and proof).

\begin{table*}[t]
\centering
\caption{Accuracy comparison across different evaluators on the \textsc{Reflective-Edit} benchmark. The last column shows the average accuracy across all evaluators.}
\label{tab:accuracy_comparison}
\footnotesize % slightly smaller text
\resizebox{0.8\textwidth}{!}{%  <-- scale entire table to 90% of page width
\begin{tabular}{lcccccc}
\toprule
\textbf{Model} &
\begin{tabular}[c]{@{}c@{}}\textbf{Gemini}\\\textbf{Pro 1.5}\end{tabular} &
\begin{tabular}[c]{@{}c@{}}\textbf{LLAMA}\\\textbf{405B}\end{tabular} &
\textbf{GPT-4o} &
\begin{tabular}[c]{@{}c@{}}\textbf{Claude 3.5}\\\textbf{Sonnet}\end{tabular} &
\begin{tabular}[c]{@{}c@{}}\textbf{Human}\\\textbf{Evaluators}\end{tabular} &
\begin{tabular}[c]{@{}c@{}}\textbf{Avg.}\\\textbf{Accuracy}\end{tabular} \\
\midrule
mPLUG-Owl~\cite{ye2023mplug}        & 44.8 & 45.1 & 43.3 & 42.7 & 49.0 & 44.8 \\
mPLUG-Owl2~\cite{ye2024mplug}       & 50.2 & 51.6 & 51.3 & 49.5 & 51.2 & 50.8 \\
LLAVA~\cite{liu2024visual}          & 48.3 & 46.8 & 47.2 & 45.9 & 43.5 & 46.3 \\
MiniGPT-4~\cite{zhu2023minigpt}     & 43.7 & 44.1 & 42.5 & 45.3 & 47.6 & 44.6 \\
CogVLM~\cite{wang2023cogvlm}        & 41.4 & 42.2 & 40.7 & 39.6 & 55.2 & 43.8 \\
InstructBLIP~\cite{instructblip}    & 47.6 & 46.9 & 48.4 & 47.2 & 56.3 & 49.2 \\
Qwen-VL~\cite{bai2023qwen}          & 49.3 & 50.4 & 48.6 & 47.8 & 52.4 & 49.7 \\
LLAMA-3.2-11B~\cite{dubey2024llama} & 55.5 & 57.2 & 56.3 & 54.8 & 60.1 & 56.7 \\
LLAMA-3.2-90B~\cite{dubey2024llama} & 64.7 & 63.5 & 66.1 & 64.3 & 58.9 & 63.5 \\
\midrule
\textbf{EVLM-RKTO} & \textbf{95.4} & \textbf{94.8} & \textbf{96.2} & \textbf{95.1} & \textbf{94.1} & \textbf{95.1} \\
\bottomrule
\end{tabular}%
} % end resizebox
\end{table*}

\begin{table}[b]
\centering
\footnotesize
\caption{Cross-benchmark performance (\%) on multimodal reasoning and visual question answering benchmarks.}
\label{tab:GQA}
\resizebox{\columnwidth}{!}{%
\begin{tabular}{lcccc}
\toprule
\textbf{Benchmark} &
\begin{tabular}[c]{@{}c@{}}\textbf{LLAMA}\\\textbf{11B}\end{tabular} &
\begin{tabular}[c]{@{}c@{}}\textbf{Qwen2}\\\textbf{7B}\end{tabular} &
\begin{tabular}[c]{@{}c@{}}\textbf{LLAVA 1.6}\\\textbf{(Vicuna 7B)}\end{tabular} &
\begin{tabular}[c]{@{}c@{}}\textbf{EVLM}\\\textbf{(RKTO)}\end{tabular} \\
\midrule
MMMU (val, CoT)           & 50.7 & \textbf{54.1} & 35.8 & 53.0 \\
MMMU-Pro (Vision)         & 33.0 & 43.5 & -- & \textbf{43.8} \\
MathVista (testmini)      & 51.5 & 58.2 & 34.6 & \textbf{59.1} \\
ChartQA (test, CoT)       & \textbf{83.4} & 83.0 & -- & 82.8 \\
AI2 Diagram (test)        & \textbf{91.1} & 83.0 & -- & 83.0 \\
DocVQA (test)             & 88.4 & \textbf{94.5} & -- & 93.1 \\
VQAv2 (test)              & 75.2 & -- & \textbf{81.8} & 77.4 \\
\bottomrule
\end{tabular}%
}
\end{table}

\section{Experiments}
\label{sec:expr}

\subsection{Model Architecture}
Our model builds on the Qwen2-VL-7B architecture~\cite{wang2024qwen2}, which combines a 675M-parameter vision encoder with a 7.6B-parameter language model to strengthen multimodal reasoning.  
Naive Dynamic Resolution~\cite{dehghani2024patch} allows variable-size visual inputs by dynamically tokenizing images without absolute position embeddings.  
We further incorporate 2D-RoPE and Multimodal Rotary Position Embedding (M-RoPE)~\cite{wang2024qwen2} to encode spatial and temporal relations across text, images, and video, enabling precise positional reasoning.

% -----------------------------------------------------------------
% Full-width main results table (Reflective-Edit benchmark)
% -----------------------------------------------------------------

\subsection{Evaluation on \textsc{Reflective-Edit}}

Since generating editing instructions is inherently subjective and often depends on human preferences, we evaluate \textsc{EVLM} using two complementary criteria. 

\begin{enumerate}[leftmargin=*, topsep=2pt]
    \item \textbf{Human-labeled Evaluation.} 
    We generate editing instructions and ask human annotators to assign binary \texttt{YES}/\texttt{NO} labels, indicating whether the generated instruction semantically aligns with the intended edit. These labels serve as the ground truth for evaluating \textsc{EVLM} and comparing it against other baselines.

    \item \textbf{LLM-based Evaluation.} 
    To further ensure scalability and consistency, we employ large language models (LLMs) as automated judges. Specifically, we use four evaluators—\texttt{Gemini~Pro~1.5}, \texttt{LLaMA~405B}, \texttt{GPT-4o}, and \texttt{Claude~3.5~Sonnet}—to provide binary \texttt{YES}/\texttt{NO} judgments on 3{,}000 unseen examples from the \textsc{Reflective-Edit} benchmark. Each evaluator marks \texttt{YES} if the generated instruction semantically matches the human reference.
\end{enumerate}

As reported in Table~\ref{tab:accuracy_comparison}, \textsc{EVLM-RKTO} outperforms all baselines, achieving an average accuracy of \textbf{95.1\%}. This demonstrates the effectiveness of reflection-aware fine-tuning for improving instruction alignment.
\paragraph{Evaluation protocol.}
We (i) fix evaluator prompts (see Supplementary), (ii) sample 3{,}000 stratified examples balanced across edit types, and (iii) compute per-evaluator agreement statistics and bootstrap confidence intervals.

% \begin{figure}[t]
%   \centering
%   \includegraphics[width=0.95\columnwidth]{}
%   \caption{\footnotesize Ablation on prompting and training strategies. CoT prompting improves accuracy over direct prompting; EVLM-RKTO achieves the best overall results.}
%   \label{fig:abl_2}
% \end{figure}

% \subsection{General Visual Understanding}
% As summarized in Table~\ref{tab:GQA}, EVLM achieves strong performance across diverse benchmarks despite having fewer parameters than LLAMA-11B.  
% It attains the highest scores on MathVista (59.1) and MMMU-Pro Vision (43.8), demonstrating robust visual–textual reasoning and quantitative comprehension.  
% While slightly below LLAMA-11B on ChartQA and AI2 Diagram, EVLM surpasses it on DocVQA (93.1) and matches the top results on VQAv2, validating that reflection-aware training improves general multimodal understanding.

\subsection{General Visual Understanding}
This section assesses EVLM's ability to retain generalization across multimodal benchmarks.  
As summarized in Table~\ref{tab:GQA}, EVLM achieves strong performance despite having fewer parameters than LLAMA-11B.  
It attains the highest scores on MathVista (59.1) and MMMU-Pro Vision (43.8), while matching or surpassing LLAMA-11B on DocVQA (93.1) and VQAv2.  
These results confirm that reflection-aware training enhances alignment and reasoning quality without compromising generalization.

\subsection{Cross-Dimensional Visual Editing}
EVLM serves as a plug-and-play module that refines textual prompts and identifies target objects for diffusion-based editors.  
When integrated with IP2P~\cite{brooks2023instructpix2pix}, it outperforms both text-only (IP2P) and image-based (CSGO) approaches in precision and controllability (Fig.~\ref{fig:image_results}).  
For video editing, we apply the Tune-A-Video recipe~\cite{wu2023tune}, where EVLM improves temporal consistency compared with Any-V2V~\cite{ku2024anyv2v}.  
\begin{figure*}[htb]
\centering
\includegraphics[width=0.85\linewidth]{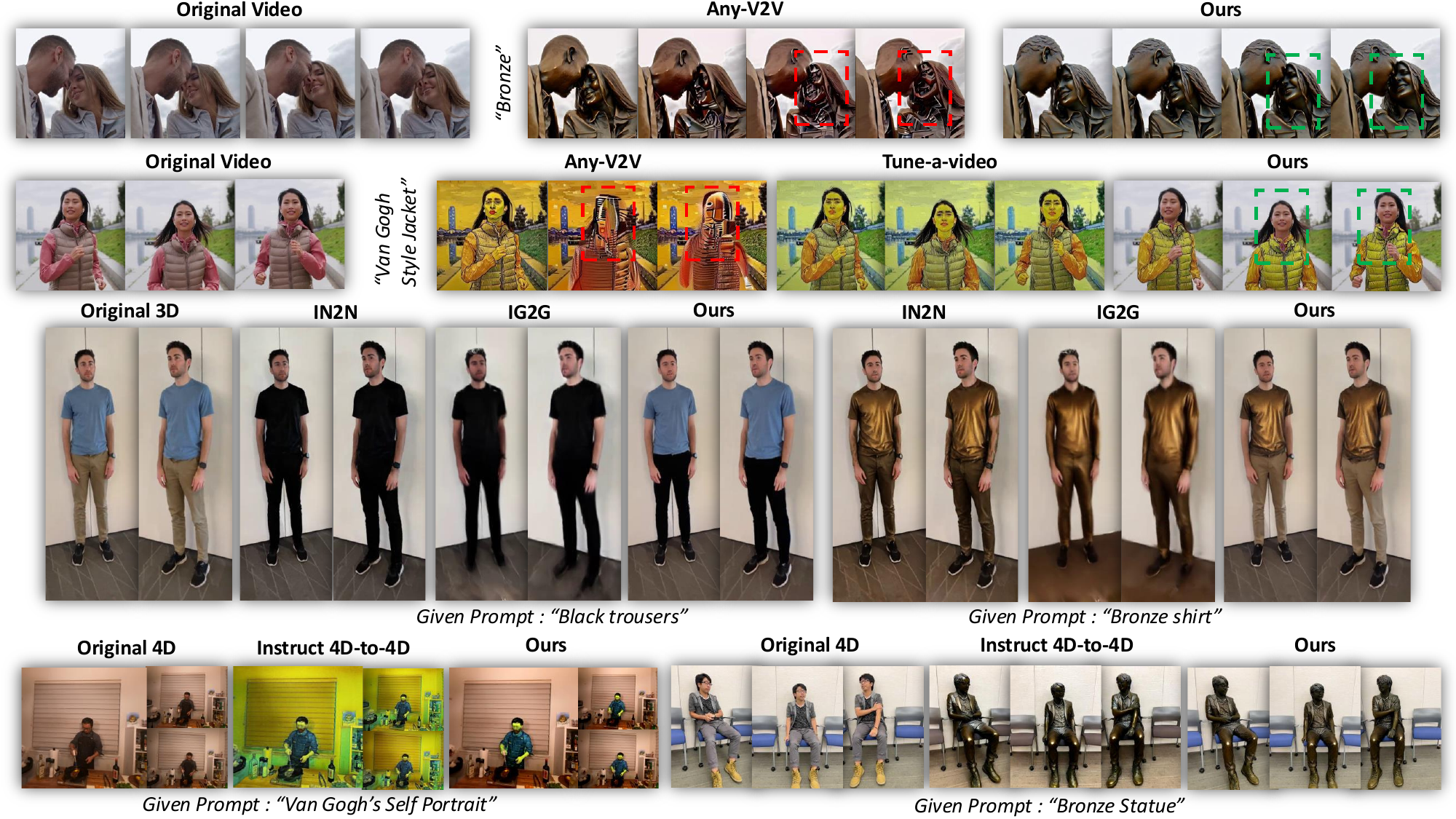}
\caption{\textbf{Text-based editing across video, 3D, and 4D tasks.}
EVLM + IP2P surpasses baselines including Any-V2V, Tune-A-Video, IN2N, and IG2G.  
It delivers improved style consistency (``Bronze,'' ``Van Gogh Style'') and stable transformations across frames.}
\label{fig:comp_results}
\vspace{-3mm}
\end{figure*}
For 3D editing, we employ 3D Gaussian Splatting~\cite{kerbl20233d} for scene reconstruction and compare against IN2N~\cite{haque2023instruct} and IG2G on the IN2N and 3DEgo datasets.  
In 4D editing, we evaluate on DyCheck, HyperNeRF, and DyNeRF/N3DV datasets using the frameworks of~\cite{wu20244d,mou2024instruct}.  
Across these tasks (Fig.~\ref{fig:comp_results}), EVLM-guided edits exhibit superior spatial consistency, color coherence, and adherence to user intent.

\begin{table}[b]
\vspace{-3mm}
\centering
\caption{Performance of \textsc{EVLM-RKTO} under varying RL train-set sizes. }
\label{tab:trainset_size}
\footnotesize % balanced readable size
\setlength{\tabcolsep}{4pt} % modestly tighten spacing
\resizebox{0.85\columnwidth}{!}{
\begin{tabular}{lcc}
\toprule
\textbf{Model / Components} & \textbf{Dataset Size} & \textbf{Accuracy (\%)} \\
\midrule
Qwen-2.5-VL-7B & -- & 52.4 \\
EVLM-KTO & 5K & 56.1 \\
EVLM-RKTO & 5K & 59.1 \\
EVLM-RKTO & 7K & 65.7 \\
EVLM-RKTO & 15K & 78.2 \\
EVLM-RKTO & 30K & \textbf{94.1} \\
\bottomrule
\end{tabular}}
\vspace{-3mm}
\end{table}
\subsection{Ablation Studies}

\paragraph{RL Train-Set Size.}
We analyze the sensitivity of \textsc{EVLM-RKTO} to the size of the reinforcement learning (RL) training set. 
Specifically, we train models on progressively smaller subsets of our original 30K dataset, using samples of 5K, 7K, and 15K. 
As shown in Table~\ref{tab:trainset_size}, \textsc{RKTO} consistently benefits from larger training data, exhibiting steady improvements in alignment and reasoning performance. 
Remarkably, even when trained with only 5K examples, \textsc{EVLM-RKTO} significantly surpasses both the base \textsc{Qwen-2.5-VL-7B} model and the standard \textsc{EVLM-KTO} variant, achieving absolute gains of 6.7\% and 3.0\%, respectively. 
These results indicate that incorporating self-reflection within RL training substantially enhances reasoning robustness and data efficiency, enabling strong performance even under limited supervision.

\paragraph{Effectiveness of Reflection-Aware RL Components.}

We observe that omitting any specific component of the Self-Reflection Reinforcement Learning (SRL) framework leads to performance degradation. 
In particular, removing the \textit{Effectiveness Reward} ($R_{\text{eff}}$) results in a drop in average accuracy from 94.1\% to 89.2\%, indicating that the model critically depends on reward signals that explicitly evaluate the quality of reflective responses to achieve optimal reasoning. 
Similarly, excluding the \textit{Reflection Reward} ($R_{\text{reflect}}$) reduces the performance from 94.1\% to 85.3\%, suggesting that missing reflection steps can interfere with the model’s reasoning accuracy.
\begin{table}[htb!]
\centering
\caption{Comparison of different training strategies evaluated by human annotators.}
\label{tab:kto_sft_comparison}
\footnotesize
\setlength{\tabcolsep}{4pt}
\resizebox{0.85\columnwidth}{!}{
\begin{tabular}{lc}
\toprule
\textbf{Method} & \textbf{Accuracy (\%)} \\
\midrule
EVLM-KTO & 75.1 \\
EVLM-KTO (2-step thinking) & 76.3 \\
EVLM-SFT & 67.2 \\
EVLM-SFT (2-step thinking) & 65.1 \\
\midrule
\textbf{Ours (EVLM-RKTO)} & \textbf{94.1} \\
\bottomrule
\end{tabular}}
\end{table}
\paragraph{Comparison with Alternative Training Objectives.}
To further validate the effectiveness of our reflection-aware fine-tuning, we train \textsc{EVLM} under standard KTO and SFT objectives, as well as the two-step thinking variants where the model generates \texttt{<think>}$\ldots$\texttt{</think>} steps before generating the final editing instruction.

Table~\ref{tab:kto_sft_comparison} summarizes the results, showing that our proposed \textsc{EVLM-RKTO} substantially outperforms all alternatives.

\section{Conclusion}
\label{sec:conclusion}

We present EVLM, a reflection-enabled vision–language model that resolves ambiguity in multimodal editing by combining Chain-of-Thought supervision with Reflection-aware KL-Divergence Target Optimization (RKTO). Trained on a curated \textsc{Reflective-Edit} corpus of 30{,}000 CoT examples and validated on 3{,}000 held-out cases, EVLM produces concise, disambiguated editing instructions and target masks that improve downstream diffusion- and NeRF-based editors. Empirically, RKTO improves both instruction fidelity and reflection quality, yielding strong gains across image, video, 3D, and 4D editing tasks.

Future work will (i) expand CoT diversity to capture broader user styles and edge cases, (ii) formalize $R_{\mathrm{reflect}}$ with additional human-in-the-loop metrics (e.g., brevity vs.\ usefulness trade-offs), and (iii) study statistical significance and robustness under noisy references. We believe EVLM's reasoning-with-reflection paradigm generalizes beyond editing to other multimodal tasks that require interpretable decisions and subjective preference alignment.

\bibliography{aaai2026}
\clearpage
\appendix
\renewcommand{\thesection}{\Alph{section}} % optional: A, B, C,...
\setcounter{section}{0}
\setcounter{secnumdepth}{2}                % <-- re-enable section numbers

\section{Overview}
\label{sec:overview}
Following is the content list of the supplementary material. 
\begin{itemize}[noitemsep]
  \item Sec.~\ref{sec:algorithm}: EVLM training Algorithm (SFT $\leftrightarrow$ RKTO).
  \item Sec.~\ref{sec:eval}: Evaluation Details..
  \item Sec.~\ref{sec:stats}: Evaluation statistics.
  \item Sec.~\ref{sec:notation}: Theoretical Analysis.
  \item Sec.~\ref{sec:impl}: Implementation Details.
  \item Sec.~\ref{sec:dataset}: Dataset generation, prompts, and distributions.
  \item Sec.~\ref{sec:add_results}: Additional Results.
  \item Sec.~\ref{sec:ablations}: Ablation Studies.
  \item Sec.~\ref{sec:user}: User Study
  
  \item Sec.~\ref{sec:limitations}: Limitations.
 
\end{itemize}

% ---------------------------------------------------------------------
% Training algorithm
% ---------------------------------------------------------------------
\section{Training Algorithm}
\label{sec:algorithm}

Algorithm~\ref{alg:sft_kto} summarizes the iterative training procedure: initial Supervised Fine-Tuning (SFT) to learn structured CoT traces, followed by alternating RKTO optimization to align outputs and reflections with human preferences.

\begin{algorithm}[h]
\caption{Iterative Training with SFT and RKTO}
\label{alg:sft_kto}
\begin{algorithmic}[1]
\Require Reference input \( R \), Original input \( O \), Dataset $\mathcal{D}=\{(x_i,y^{\text{ref}}_i,r^{\text{ref}}_i)\}$, SFT epochs $E_{\text{SFT}}$, RKTO epochs $E_{\text{RKTO}}$
\State Initialize EVLM model $\rho_\phi$ from Qwen2-VL-7B backbone
\For{\( e=1 \) \textbf{to} \(E_{\text{SFT}}\)}
  \State Minimize $\mathcal{L}_{\mathrm{SFT}} = -\sum_t \log p_\phi(y_t^{\text{ref}}\mid y_{<t}, x)$
  \State Periodically save SFT snapshot $\rho_{\mathrm{ref}}$
\EndFor
\For{\( e=1 \) \textbf{to} \(E_{\text{RKTO}}\)}
  \State Sample preference batch $\{(x_i, y^{(i)}_{\mathrm{pref}}, R^{(i)}_{\mathrm{eff}}, R^{(i)}_{\mathrm{reflect}})\}$
  \State Compute importance weights and $\widehat{\mathcal{L}}_{\mathrm{RKTO}}$ (Eq.~\ref{eq:batch_rkto_app})
  \State Update $\phi$ using the estimator in Eq.~\ref{eq:grad_est_app} (policy-gradient + reflection surrogate)
  \If{checkpoint time} save $\rho_{\mathrm{ref}}\leftarrow\rho_\phi$
  \EndIf
\EndFor
\end{algorithmic}
\end{algorithm}

Implementation notes:
\begin{itemize}[noitemsep]
  \item We cache SFT log-probs to reduce RKTO computation where possible.
  \item RKTO uses a smaller LR than SFT for stability (see Sec.~\ref{sec:impl}).
  \item Non-differentiable components (IoU) are optimized using REINFORCE with baseline centering.
\end{itemize}

% ---------------------------------------------------------------------
% Evaluation prompts & annotator instructions
% ---------------------------------------------------------------------
\section{Evaluation Details}
\label{sec:eval}

This section reproduces the evaluator prompts and a concise summary of human annotator instructions used in the experiments (these were fixed for the 3,000-example evaluation split).

\subsection*{LLM evaluator prompt}
\begin{quote}\small
You are given: (1) the original image (and reference image/text if present), (2) a human reference editing instruction, and (3) an automatically generated editing instruction. Your task: answer \texttt{YES} if the generated instruction semantically matches the human reference (i.e., would produce a visually similar edit to the target), otherwise answer \texttt{NO}. Consider object identity, target region, and transformation details. Do not hallucinate details beyond the provided inputs. Reply with a single token: \texttt{YES} or \texttt{NO}.
\end{quote}

\subsection*{Human annotator instructions}
Annotators received the following guidelines (full guideline documents were provided to raters and released with code):
\begin{itemize}[noitemsep]
  \item Present the original image, the reference (if any), and the candidate instruction side-by-side.
  \item Ask: ``Does this instruction, when applied, match the intended edit?'' — answer \texttt{YES}/\texttt{NO}.
  \item Provide illustrative examples of \texttt{YES} and \texttt{NO} cases, emphasizing object semantics (``bag'', ``jacket''), region specificity (``left sleeve''), and transformation type (``change color'', ``add pattern'').
  \item Ambiguous cases: use majority vote among 3 raters; record per-rater judgments to compute Cohen's and Fleiss' kappa.
  \item Instruct annotators not to assume external context beyond the shown reference(s).
\end{itemize}

% ---------------------------------------------------------------------
% Bootstrap protocol & evaluation statistics
% ---------------------------------------------------------------------
\section{Evaluation Statistics}
\label{sec:stats}

We describe the protocol used to compute per-evaluator accuracies, bootstrap confidence intervals (CIs), and inter-rater agreement reported in the paper.

\subsection*{Evaluation sampling}
\begin{itemize}[noitemsep]
  \item Draw a stratified evaluation set of $N=3{,}000$ examples balanced across edit types (color, style, texture, object-replacement, layout change).
  \item For each example, collect binary judgments (\texttt{YES}/\texttt{NO}) from each evaluator (LLMs and humans).
\end{itemize}

\subsection*{Bootstrap CI computation}
\begin{enumerate}[noitemsep]
  \item Draw 10{,}000 bootstrap resamples (with replacement) of size $N$ from the evaluation set.
  \item For each resample and each evaluator, compute the fraction of \texttt{YES} responses.
  \item The reported 95\% CI is the empirical 2.5th--97.5th percentile of the bootstrap distribution.
\end{enumerate}

\subsection*{Agreement measures}
\begin{itemize}[noitemsep]
  \item For human annotators, compute pairwise Cohen’s $\kappa$ and Fleiss’ $\kappa$ when groups of \>2 raters are used.
  \item For human vs. LLM agreement, compute percent agreement and Cohen’s $\kappa$ between LLM judgments and the majority human label.
\end{itemize}

\subsection*{Representative CI numbers (example)}
Table~\ref{tab:bootstrap_dummy} reproduces the representative bootstrap statistics included in the supplement (replace with final numbers upon camera-ready).

\begin{table}[h]
\centering
\scriptsize                % smaller font than \small
\setlength{\tabcolsep}{4pt}% tighten horizontal padding
\renewcommand{\arraystretch}{0.95}% tighten vertical padding
\caption{Representative bootstrap statistics for EVLM-RKTO.}
\label{tab:bootstrap_dummy}
\begin{tabular}{lcc}
\toprule
Evaluator & Mean Accuracy (\%) & 95\% Bootstrap CI (\%) \\
\midrule
Gemini Pro 1.5 & 95.4 & [94.2, 96.4] \\
LLAMA 405B & 94.8 & [93.5, 95.9] \\
GPT-4o & 96.2 & [95.1, 97.3] \\
Claude 3.5 Sonnet & 95.1 & [93.9, 96.2] \\
Human (majority) & 94.1 & [92.8, 95.4] \\
\midrule
Average (EVLM-RKTO) & 95.1 & [94.0, 96.0] \\
\bottomrule
\end{tabular}
\end{table}

% ---------------------------------------------------------------------
% Notation + batched RKTO objective
% ---------------------------------------------------------------------
% ---------------------------------------------------------------------
% Supplementary: Theoretical Analysis and Notation
% ---------------------------------------------------------------------
\section{Theoretical Analysis}
\label{sec:notation}

\subsection{Notation and Batched RKTO Objective}

Here we restate the key symbols and provide the batched empirical form of the RKTO objective used during training.

\begin{itemize}[noitemsep]
  \item $x=(\mathcal{V},u)$: multimodal input context, consisting of visual inputs $\mathcal{V}$ and textual prompt $u$. 
        $\mathcal{V}$ may include an RGB image, video clip, depth map, or OCR-extracted text.
  \item $\hat y$: model-generated instruction tokens; 
        $\hat m$: predicted mask (decoded from mask tokens, when available);
        $m_{\mathrm{ref}}$: reference mask.
  \item $\rho_\phi(\cdot\mid x)$: current model (EVLM) conditional distribution, parameterized by $\phi$.
  \item $\rho_{\mathrm{ref}}(\cdot\mid x)$: reference policy (the SFT snapshot baseline).
  \item $\rho_{\mathrm{pref}}(\cdot\mid x)$: empirical human-preferred distribution.
  \item $y_{\mathrm{pref}}$: human-preferred response corresponding to input $x$.
  \item $R_{\mathrm{eff}}\!\in[0,1]$: instruction-effectiveness reward (semantic + mask IoU).
  \item $R_{\mathrm{reflect}}\!\in[0,1]$: reflection-quality reward.
  \item $w(\cdot)$: monotonic importance-weight function, defined as
        $w(s)=\mathrm{clip}(\mathrm{softplus}(s),0,w_{\max})$ with 
        $\mathrm{softplus}(s)=\log(1+e^s)$.
  \item $\lambda_{\mathrm{ref}}\!\ge0$: weighting coefficient for reflection alignment.
  \item $e(\cdot)$: Qwen2-VL text encoder embedding function; cosine values are rescaled to $[0,1]$ via $\widetilde{\cos}(a,b)=(1+\cos(a,b))/2$.
  \item Intersection-over-Union (IoU) between predicted and reference masks:
        \[
          \mathrm{IoU}(\hat m, m_{\mathrm{ref}})
          = \frac{|\hat m \cap m_{\mathrm{ref}}|}{|\hat m \cup m_{\mathrm{ref}}|}.
        \]
\end{itemize}

For a mini-batch $\{(x_i, y^{(i)}_{\mathrm{pref}})\}_{i=1}^B$, 
the normalized log-ratio statistics are computed as:
\begin{align}
\hat s_\phi^{(i)} &=
\log\frac{\rho_\phi\!\left(y^{(i)}_{\mathrm{pref}}\mid x_i\right)}
          {\rho_{\mathrm{ref}}\!\left(y^{(i)}_{\mathrm{pref}}\mid x_i\right)}, \\[2pt]
\hat\eta_0 &= 
\max\!\Biggl(0,\;
 \frac{1}{B(B-1)}\!
 \sum_{i\ne j}
 \log\frac{\rho_\phi\!\left(y^{(j)}_{\mathrm{pref}}\mid x_i\right)}
          {\rho_{\mathrm{ref}}\!\left(y^{(j)}_{\mathrm{pref}}\mid x_i\right)}\!
 \Biggr).
\end{align}

The empirical RKTO loss minimized during training is
\begin{equation}
\label{eq:batch_rkto_app}
\begin{split}
\widehat{\mathcal{L}}_{\mathrm{RKTO}}
= \frac{1}{B}\sum_{i=1}^B
  \Big[
    w\!\big(\hat s_\phi^{(i)}-\hat\eta_0\big)\,R^{(i)}_{\mathrm{eff}}
    + \lambda_{\mathrm{ref}}\,R^{(i)}_{\mathrm{reflect}}
  \Big].
\end{split}
\end{equation}

% ---------------------------------------------------------------------
\subsection{Instruction Effectiveness $R_{\mathrm{eff}}$}
\label{sec:rewards}
The instruction-effectiveness reward $R_{\mathrm{eff}}\!\in\![0,1]$ measures how well the generated instruction $\hat y$ matches the human reference $y_{\mathrm{pref}}$ both semantically and spatially. It combines:
\begin{enumerate}[noitemsep]
  \item the cosine similarity between text embeddings of $\hat y$ and $y_{\mathrm{pref}}$, rescaled to $[0,1]$; and
  \item the Intersection-over-Union (IoU) between predicted and reference masks, also normalized.
\end{enumerate}

The final convex combination is
\begin{equation}
R_{\mathrm{eff}} = \alpha\,\widetilde{\mathrm{cos}}\!\big(e(\hat y), e(y_{\mathrm{pref}})\big)
                 + (1-\alpha)\,\mathrm{IoU}(\hat m,m_{\mathrm{ref}}),
\end{equation}
where $\alpha$ is tuned on a validation set ($\alpha{=}0.7$ in our experiments).

% ---------------------------------------------------------------------
\subsection{Reflection Reward $R_{\mathrm{reflect}}$}
The reflection-quality reward encourages concise, semantically consistent self-reflection.  
We employ the differentiable surrogate:
\begin{align}
\label{eq:Rreflect_app}
R_{\mathrm{reflect}} &= 
\beta_1\,\widetilde{\mathrm{cos}}\!\big(e(r_{\mathrm{refl}}), e(y_{\mathrm{pref}})\big)
+ \beta_2\,\exp\!\big(-\gamma\,\mathrm{len}(r_{\mathrm{refl}})\big) \nonumber\\
&\quad+ \beta_3\!\Big(1-\mathrm{KL}\!\big(p_{\mathrm{int}}\|p_{\mathrm{refl}}\big)\Big),
\end{align}
where:
\begin{itemize}[noitemsep]
  \item $r_{\mathrm{refl}}$: reflection segment from the model’s CoT trace;
  \item $p_{\mathrm{int}}$, $p_{\mathrm{refl}}$: token distributions for the \texttt{<intermediate>} and \texttt{<reflection>} segments;
  \item $\mathrm{len}(\cdot)$: token length of reflection (brevity term);
  \item coefficients $\beta_1+\beta_2+\beta_3=1$ with $\beta_k\ge0$.
\end{itemize}

The differentiable reflection loss is 
$\mathcal{L}_{\mathrm{reflect}} = 1 - R_{\mathrm{reflect}}$.  
For non-differentiable components of $R_{\mathrm{eff}}$ (e.g., IoU),
we apply REINFORCE with a batch-mean baseline (see Sec.~\ref{sec:grad}).

% ---------------------------------------------------------------------
\subsection{Gradient Estimator and Variance Reduction}
\label{sec:grad}

Differentiating Eq.~\eqref{eq:batch_rkto_app} yields the gradient estimator used in training.
For compactness, define 
$c_i = w(\hat s_\phi^{(i)}-\hat\eta_0)$ and 
$d_i = c_i\,(1-\alpha)\,(\mathrm{IoU}^{(i)}-b_{\mathrm{IoU}})$.
Then
\begin{equation}
\label{eq:grad_est_app}
\begin{aligned}
\nabla_\phi \widehat{\mathcal{L}}_{\mathrm{RKTO}}
\approx\;
&\frac{1}{B}\sum_{i=1}^{B} c_i\,R_{\mathrm{eff}}^{(i)}\,
  \nabla_\phi\log\rho_\phi\!\big(y^{(i)}_{\mathrm{pref}}\mid x_i\big)\\[3pt]
&+\lambda_{\mathrm{ref}}\nabla_\phi\mathcal{L}_{\mathrm{reflect}}\\[3pt]
&+\frac{1}{B}\sum_{i=1}^{B} d_i\,
  \nabla_\phi\log\rho_\phi\!\big(\hat m^{(i)}\mid x_i\big).
\end{aligned}
\end{equation}

Variance reduction techniques include:
(i) clipping $w(\cdot)$ to $[0,w_{\max}]$;  
(ii) centering rewards $(R-\bar R)$;  
(iii) smaller learning rate for RKTO ($\text{LR}_{\text{RKTO}}\!\ll\!\text{LR}_{\text{SFT}}$);  
(iv) averaging multiple Monte-Carlo mask samples per example; and  
(v) gradient and importance-weight clipping.

% ---------------------------------------------------------------------
\subsection{Expected--KL Improvement}
\label{sec:theory}

The following theorem formalizes the expected improvement guarantee of the RKTO surrogate.

\begin{theorem}[Monotonic Alignment Guarantee]
\label{thm:monotone_app}
Let rewards $R_{\mathrm{eff}}$ and $R_{\mathrm{reflect}}$ be bounded in $[0,1]$,
$w(\cdot)$ be non-decreasing and bounded, and training steps sufficiently small.  
Define
\begin{align}
\mathcal{K}(\phi)
&=\mathrm{KL}\!\big(\rho_\phi(y\mid x)\,\|\,\rho_{\mathrm{pref}}(y\mid x)\big)\nonumber\\
&\quad+\lambda_{\mathrm{ref}}\,
\mathrm{KL}\!\big(\rho_\phi(r_{\mathrm{refl}}\mid x)\,\|\,\rho_{\mathrm{pref}}(r_{\mathrm{refl}}\mid x)\big).
\end{align}
Then any update step that decreases $\mathcal{L}_{\mathrm{RKTO}}$ in expectation
also decreases $\mathbb{E}_x[\mathcal{K}(\phi)]$, ensuring joint alignment of output and reflection distributions.
\end{theorem}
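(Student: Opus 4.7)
The plan is to show that the gradient of the RKTO surrogate is, up to bounded stochastic error and a positive rescaling, aligned with the negative gradient of the composite KL $\mathcal{K}(\phi)$, so that a sufficiently small descent step on $\mathcal{L}_{\mathrm{RKTO}}$ must also decrease $\mathbb{E}_x[\mathcal{K}(\phi)]$. First I would decompose $\mathcal{K}(\phi)=\mathcal{K}_{\mathrm{out}}(\phi)+\lambda_{\mathrm{ref}}\mathcal{K}_{\mathrm{refl}}(\phi)$ and match the two terms of $\mathcal{L}_{\mathrm{RKTO}}$ in Eq.~\eqref{eq:batch_rkto_app} to these two components, so that the proof reduces to verifying a descent inequality for each piece separately. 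Then a standard descent lemma with step size $\eta\le 1/L$, where $L$ is the Lipschitz constant of $\nabla_\phi\mathcal{L}_{\mathrm{RKTO}}$, converts the gradient alignment into a monotone decrease of $\mathcal{K}$.

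For $\mathcal{K}_{\mathrm{out}}$, the key observation is that the importance weight $w(\hat s_\phi-\hat\eta_0)$ is a non-decreasing, bounded function of the log-ratio $\log(\rho_\phi/\rho_{\mathrm{ref}})$, so it acts as a strictly positive scalar multiplier of the score-function gradient $\nabla_\phi\log\rho_\phi(y_{\mathrm{pref}}\mid x)$ in Eq.~\eqref{eq:grad_est_app}. Interpreting the empirical draws $y_{\mathrm{pref}}$ as samples from $\rho_{\mathrm{pref}}$, this REINFORCE-style term is an unbiased stochastic estimator of a positive-scalar multiple of $-\nabla_\phi\mathbb{E}_{\rho_{\mathrm{pref}}}[\log\rho_\phi(y\mid x)]$, which equals $-\nabla_\phi\mathcal{K}_{\mathrm{out}}(\phi)$ up to a $\phi$-independent constant. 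The bounded-reward assumption $R_{\mathrm{eff}}\in[0,1]$ and the boundedness of $w$ ensure the variance of this estimator is finite, so a batch-averaged update produces the required expected descent direction.

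For $\mathcal{K}_{\mathrm{refl}}$, I would use the explicit decomposition in Eq.~\eqref{eq:Rreflect_app}: the cosine term and the explicit $\mathrm{KL}(p_{\mathrm{int}}\Vert p_{\mathrm{refl}})$ term together give a differentiable upper bound on $\mathcal{K}_{\mathrm{refl}}$ up to a constant, because $\mathcal{L}_{\mathrm{reflect}}=1-R_{\mathrm{reflect}}$ is a convex combination whose dominant components are a cross-entropy proxy and a KL between intermediate and reflective predictive distributions. The brevity term $\exp(-\gamma\,\mathrm{len}(r_{\mathrm{refl}}))$ contributes a uniformly bounded perturbation whose gradient magnitude can be absorbed into the Lipschitz constant $L$. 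Combining with the output part, the inner product $\langle -\nabla_\phi\mathcal{L}_{\mathrm{RKTO}},\,\nabla_\phi\mathcal{K}\rangle$ is non-negative in expectation, which together with the descent lemma yields the monotone alignment claim.

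The hard part will be controlling the REINFORCE estimator for the non-differentiable IoU component of $R_{\mathrm{eff}}$, because the baseline-centered term $d_i$ in Eq.~\eqref{eq:grad_est_app} has nonzero variance that could in principle dominate the drift for small enough true gradients. I would handle this by invoking the batch-mean baseline to preserve unbiasedness and then bounding the variance uniformly using the $[0,1]$ range of IoU and the clipping $w(\cdot)\le w_{\max}$, so that the expected update remains a descent direction provided the step size is smaller than a constant depending on $w_{\max}$, $\gamma$, and the encoder Lipschitz constant. Two secondary obstacles are the non-smoothness of $w$ at the clipping boundary, which I would treat via a subgradient descent argument assuming iterates remain in the interior almost surely along the optimization path, and the identification of the empirical annotator distribution with $\rho_{\mathrm{pref}}$, which requires a mild concentration argument on the preference labels to close the expectation.
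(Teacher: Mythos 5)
Your plan follows the same route as the paper's own (very terse) argument: write the KL gradients via the log-derivative trick, argue that the $w$-weighted score term in the RKTO gradient estimator points in the same direction in expectation, and convert alignment into monotone decrease with a small-step descent lemma. In making that sketch explicit, however, your proposal commits to a step that does not hold. The claim that $w(\hat s_\phi-\hat\eta_0)\,R_{\mathrm{eff}}$ ``acts as a strictly positive scalar multiplier'' of the score-function gradient is false inside the expectation: the weight depends on the sample $y_{\mathrm{pref}}$ through both $\hat s_\phi$ and $R_{\mathrm{eff}}$, so $\mathbb{E}_{y\sim\rho_{\mathrm{pref}}}\!\big[\,wR\,\nabla_\phi\log\rho_\phi(y\mid x)\big]$ is a covariance-reweighted vector, not a positive multiple of $\mathbb{E}_{y\sim\rho_{\mathrm{pref}}}\!\big[\nabla_\phi\log\rho_\phi(y\mid x)\big]$. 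The inequality you actually need, $\big\langle -\nabla_\phi\mathcal{L}_{\mathrm{RKTO}},\,\nabla_\phi\mathcal{K}\big\rangle\le 0$ in expectation, can fail when $wR$ correlates adversely with components of the score; monotonicity and boundedness of $w$ do not exclude this. Closing the gap requires an explicit covariance or ordering condition (or the degenerate case of a sample-independent weight), which neither you nor the paper supplies --- your write-up simply makes the paper's hand-wave visible rather than repairing it.

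There is also a KL-direction mismatch in your identification of the target. The theorem's $\mathcal{K}$ uses $\mathrm{KL}\!\big(\rho_\phi\,\|\,\rho_{\mathrm{pref}}\big)$, whose gradient is an expectation under $\rho_\phi$ weighted by the log-density ratio; the cross-entropy identity you invoke shows instead that $-\nabla_\phi\mathbb{E}_{\rho_{\mathrm{pref}}}[\log\rho_\phi(y\mid x)]=\nabla_\phi\mathrm{KL}\!\big(\rho_{\mathrm{pref}}\,\|\,\rho_\phi\big)$, the reverse divergence. These two gradients are not equal, nor negatives of one another, up to any $\phi$-independent constant, so even a completed version of your argument would control the forward KL $\mathrm{KL}(\rho_{\mathrm{pref}}\|\rho_\phi)$ rather than the $\mathcal{K}$ in the statement; either the definition of $\mathcal{K}$ or the proof must be changed to match. (You also inherit a sign inconsistency from the paper: $\widehat{\mathcal{L}}_{\mathrm{RKTO}}$ as written aggregates rewards with positive sign, so ``decreasing'' it is the wrong-signed event.) The obstacles you do flag --- REINFORCE variance for the IoU term, non-smoothness of the clipped $w$, and the empirical versus population preference distribution --- are legitimate but secondary; the binding issues are the two above.
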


\begin{proof}
The gradient of each KL term can be expressed, via the log-derivative trick,
as an expectation of $\nabla_\phi\log\rho_\phi$ weighted by the log-density ratio.
The RKTO objective employs the same log-ratio statistic $s_\phi$ with baseline $\eta_0$
and monotonic weight $w(\cdot)$. Because the rewards are bounded and $w$ preserves
ordering, the update direction of $-\nabla_\phi\mathcal{L}_{\mathrm{RKTO}}$ is aligned
with that of $\nabla_\phi\mathcal{K}(\phi)$ in expectation.
Therefore, sufficiently small gradient steps that reduce $\mathcal{L}_{\mathrm{RKTO}}$
also reduce the expected composite divergence $\mathcal{K}(\phi)$, guaranteeing monotonic improvement.
\end{proof}

% ---------------------------------------------------------------------
% Implementation & defaults
% ---------------------------------------------------------------------
\section{Implementation Details}
\label{sec:impl}

Representative defaults used in experiments:
\begin{itemize}[noitemsep]
  \item Optimizer: AdamW. SFT LR $=5\times10^{-5}$; RKTO LR $=2\times10^{-5}$.
  \item Batch size $B=64$ (use gradient accumulation if needed).
  \item $\lambda_{\mathrm{ref}}=0.2$, $\beta=(0.5,0.2,0.3)$, $\gamma=0.2$.
  \item $w_{\max}=10$, softplus threshold $\tau=0.1$.
  \item Monte-Carlo samples per example: 3 (empirical trade-off between variance and cost).
  \item Reference policy $\rho_{\mathrm{ref}}$: SFT snapshot (pre-compute/caching of log-probs when feasible).
  \item LoRA rank 64, alpha 16, dropout 0.05 (when LoRA used).
\end{itemize}

Compute usage (approx):
\begin{itemize}[noitemsep]
  \item SFT (30k samples): $\sim$48 GPU-hours on 8$\times$A100 (80GB).
  \item RKTO stage: $\sim$36 GPU-hours on 8$\times$A100 (80GB).
\end{itemize}

% ---------------------------------------------------------------------
% Dataset and prompts
% ---------------------------------------------------------------------
\section{Dataset details and prompts}
\label{sec:dataset}
\begin{table}[t]
    \centering
    \caption{Data Distribution for EVLM CoT Fine-Tuning}
    \label{tab:data_distribution}
    \begin{tabular}{p{0.4\columnwidth} p{0.2\columnwidth} c}
        \hline
        \textbf{Reference} & \textbf{Original} & \textbf{\# of Samples} \\
        \hline
        Image & Image & 10,000 \\
        Image+Text & Image & 5,000 \\
        Video & Image & 4,000 \\
        Image & Video & 3,000 \\
        Video+Text & Image & 3,000 \\
        Text & - & 5,000 \\
        \hline
    \end{tabular}\label{tab:dataset}
\end{table}
\begin{figure}[t] \centering \includegraphics[width=0.8\linewidth]{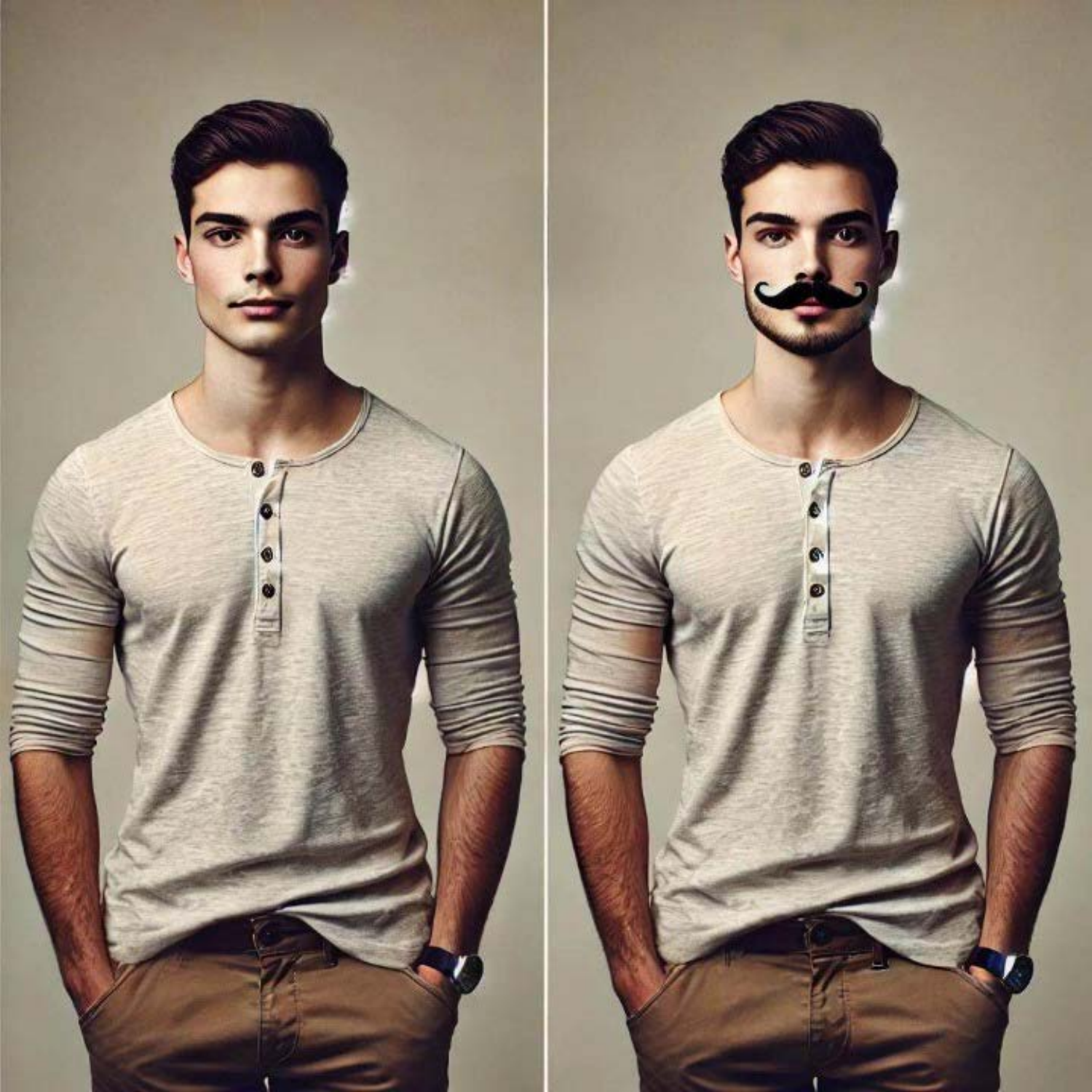} \caption{An example of a diptych generated for the EVLM dataset using DALL-E-3. The left panel displays the original image, while the right panel showcases the edited image, reflecting the instruction to add a mustache.} \label{fig:diptych_example} 
\end{figure}
To train EVLM for multimodal editing instruction generation, we developed a comprehensive dataset that emphasizes diversity in input modalities and output clarity. The dataset preparation process involved \textbf{generation}, \textbf{augmentation}, and \textbf{refinement with human evaluation}, ensuring both the quality and breadth of data. In our training, we also include nearly 20\% training samples from cross-domain datasets used in \cite{cheng2024vision,chen2024internvl,xu2024llava} in addition to our MME data. 
\noindent Cross-benchmark evaluations in the main paper use:
\begin{itemize}[noitemsep]
  \item MMMU / MMMU-Pro \citep{yue2024mmmu}
  \item MathVista \citep{lu2023mathvista}
  \item ChartQA \citep{masry2022chartqa}
  \item AI2 Diagram \citep{kembhavi2016diagram}
  \item DocVQA \citep{mathew2021docvqa}
  \item VQAv2 \citep{shen2023generative}
\end{itemize}
\subsubsection{Dataset Overview}
The dataset consists of 30,000 samples distributed across six categories of input combinations, as shown in Table~\ref{tab:dataset}. Each sample contains:
\begin{itemize}
    \item \textbf{Reference Input (\(R\))}: A combination of image, video, or text to provide editing context.
    \item \textbf{Original Input (\(O\))}: The target image or video requiring modifications.
    \item \textbf{Reflective Rationale and Instruction}: A detailed rationale generated by GPT-4o, followed by a human-reviewed editing instruction.
\end{itemize}

\begin{table*}[htb!]
\centering
\caption{Examples of Reference Images and Corresponding Editing Instructions}
\label{tab:editing_examples}
\begin{tabular}{c|c|p{7cm}}
\hline
\textbf{Reference Image 1} & \textbf{Reference Image 2} & \textbf{Editing Instruction} \\ \hline
Einstein's face & Green threads & Turn his face into Einstein and turn his jacket green. \\ \hline
Batman’s face & Blue shirt & Turn his face into Batman and turn his shirt blue. \\ \hline
Spider-Man’s mask & Leather jacket & Turn his face into Spider-Man’s mask and turn his jacket into leather. \\ \hline
Mona Lisa’s face & Green dress & Turn her face into Mona Lisa’s face and turn her dress green. \\ \hline
Robotic face & Black boots & Turn his face into a robotic face and turn his boots black. \\ \hline
Cat's face & Yellow scarf & Turn their face into a cat's face and turn their scarf yellow. \\ \hline
Superman's face & Red trousers & Turn his face into Superman’s face and turn his trousers red. \\ \hline
A clown's face & Striped T-shirt & Turn his face into a clown's face and turn his T-shirt into a striped pattern. \\ \hline
Iron Man’s mask & Metallic armor & Turn his face into Iron Man’s mask and turn his outfit into metallic armor. \\ \hline
A Viking’s face & Fur coat & Turn his face into a Viking’s face and turn his coat into fur. \\ \hline
A painter’s face & Multi-colored apron & Turn her face into a painter’s face and turn her apron multi-colored. \\ \hline
Santa Claus’s face & Red and white suit & Turn his face into Santa Claus and turn his outfit into a red and white suit. \\ \hline
A robot’s head & Silver gloves & Turn his face into a robot’s head and turn his gloves silver. \\ \hline
A pirate’s face & Black hat & Turn his face into a pirate’s face and add a black hat. \\ \hline
A magician’s face & White gloves & Turn his face into a magician’s face and turn his gloves white. \\ \hline
A superhero’s face & Cape & Turn his face into a superhero’s face and add a red cape. \\ \hline
A lion’s face & Golden mane & Turn their face into a lion’s face and turn their hairstyle into a golden mane. \\ \hline
A wizard’s face & Magic staff & Turn his face into a wizard’s face and add a magic staff to his hand. \\ \hline
A medieval knight’s face & Shiny armor & Turn his face into a medieval knight’s face and turn his outfit into shiny armor. \\ \hline
\end{tabular}
\end{table*}

\begin{table*}[t]
\centering
\caption{Examples of Ambiguous Text and Image as References for Editing Instructions}
\label{tab:ambiguous_references}
\begin{tabular}{c|c|p{7cm}}
\hline
\textbf{Reference 1 (Text)} & \textbf{Reference 2 (Image)} & \textbf{Editing Instruction} \\ \hline
Make it heroic & Superman's costume & Turn his outfit into Superman's costume while keeping his face unchanged. \\ \hline
Give it a metallic vibe & A shiny silver texture & Turn the jacket into a silver metallic style while preserving the rest of the image. \\ \hline
Bring a touch of royalty & A golden crown & Add a golden crown to the person's head without altering their facial features. \\ \hline
Add an artistic feel & Van Gogh’s painting & Turn his jacket into a style resembling Van Gogh's Starry Night painting. \\ \hline
Transform into a character & Spider-Man’s suit & Change his outfit into Spider-Man’s suit while leaving his face intact. \\ \hline
Make it winter-ready & A fur coat & Replace his jacket with a warm fur coat suitable for winter. \\ \hline
Add a festive spirit & Christmas decorations & Turn her dress into a Christmas-themed outfit with red and white patterns. \\ \hline
Create a futuristic look & A robotic arm design & Replace his arms with robotic prosthetics while keeping the rest of the image unaltered. \\ \hline
Make it classic & A vintage tuxedo & Turn his outfit into a vintage tuxedo, keeping his hairstyle and face the same. \\ \hline
Turn it into nature & A green leafy texture & Change her jacket into a pattern resembling green leaves. \\ \hline
Add a magical touch & A wizard's robe & Transform his outfit into a wizard’s robe with stars and moons. \\ \hline
Show some adventure & A pirate's hat & Add a pirate’s hat and an eyepatch while leaving the outfit unchanged. \\ \hline
Brighten it up & A vibrant yellow scarf & Add a yellow scarf to her outfit without altering any other details. \\ \hline
Make it sporty & A football jersey & Replace his shirt with a football jersey representing a famous team. \\ \hline
Bring a cultural touch & A traditional Japanese kimono & Turn her outfit into a traditional Japanese kimono. \\ \hline
\end{tabular}
\end{table*}
 \begin{figure*}[t]
    \centering
    \includegraphics[width=0.9\linewidth]{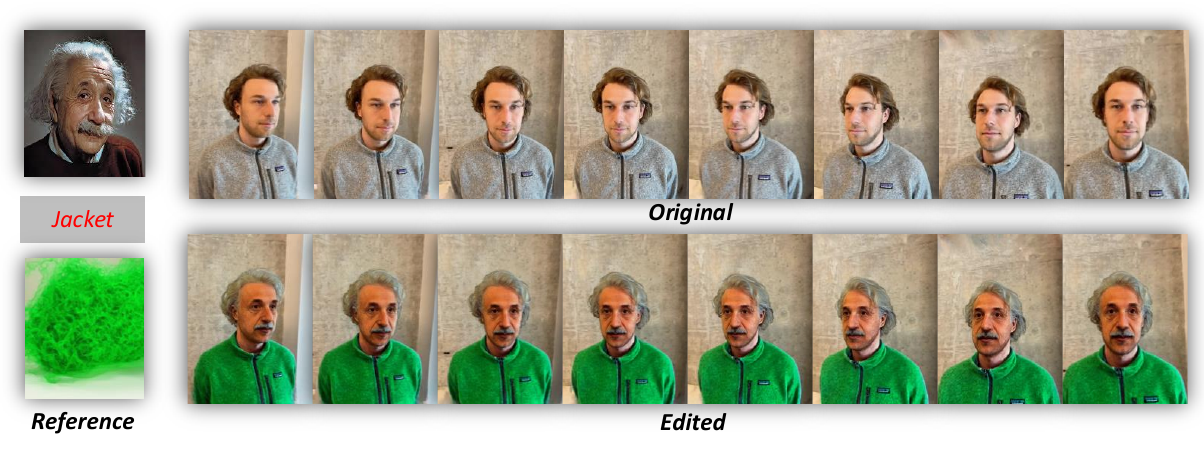}

    \caption{\textbf{Reference Image and Text.} Examples of 3D editing results generated using EVLM + IP2P. The model produces context-aware editing instructions based on the reference content, autonomously interpreting the editing rationale to generate optimal instructions for the desired outcome. This example demonstrates multi-attribute editing, where the jacket is transformed into green, and the face is altered to resemble Einstein.}

    \label{fig:3D_results}

\end{figure*}
\begin{figure}[htb]

    \centering
    \includegraphics[width=1\linewidth, clip]{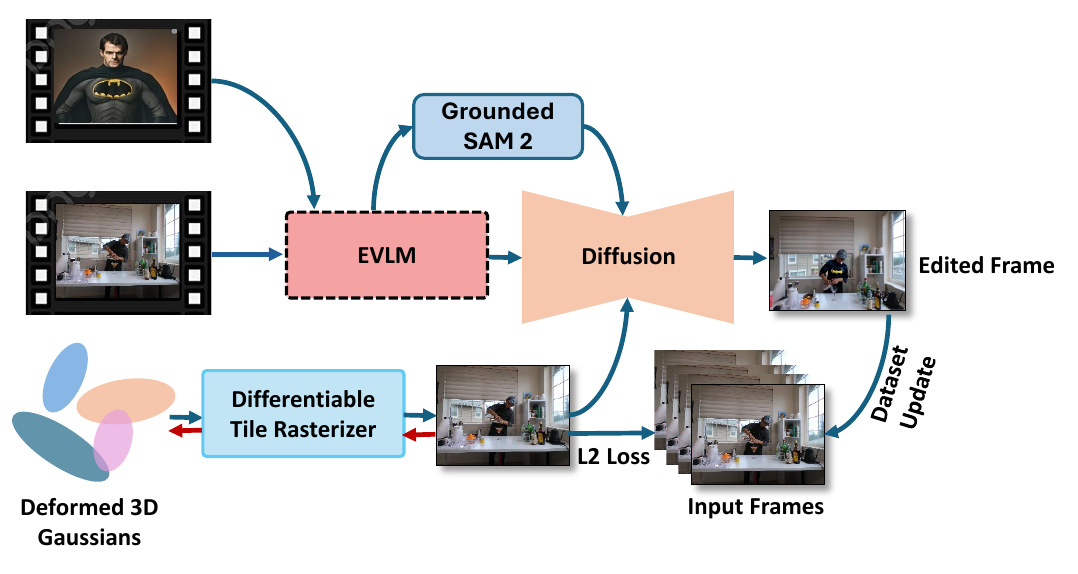}
    \caption{\footnotesize The 4D editing pipeline guided by EVLM, which outputs both the editing instruction and the target object to be modified. The identified target object is passed through Grounded SAM 2 to generate precise masks, which guide local editing in the diffusion model based on the provided instruction. The edited frames are then iteratively updated in the dataset to ensure temporal and spatial consistency, while an L2 loss is applied between input and edited frames for quality refinement, resulting in a consistent and temporally smooth 4D dataset.}
    \label{fig:EVLM+4D}
\end{figure}

\noindent \paragraph{Generation of Paired Images} To prepare a subset where both the reference and the original are images, we generate paired image samples using DALL-E 3. Since we want EVLM to understand the relationship between the given reference and original visual cues, instead of relying solely on textual descriptions, we create \textbf{diptychs}, where the original input image and the edited output image are displayed side by side. This approach enhances alignment and correspondence between input-output image pairs by ensuring both visual and semantic consistency.

We use prompts designed to capture various transformations while maintaining the visual integrity of the input image. For instance, to generate the paired image from DALL-E 3 shown in Figure~\ref{fig:diptych_example}, we use a prompt like:
\begin{quote} \textit{Generate a diptych with two side-by-side images. On the left side, there should be a person standing confidently with a neutral expression, wearing casual clothing. On the right side, show the same person with a mustache while keeping the rest of the visual attributes unchanged.} \end{quote}

This methodology allows for generating a variety of image pairs that focus on specific edits, such as adding or removing objects, changing facial features, or modifying global attributes like color or style. By leveraging DALL-E 3’s generative capabilities, we ensure that the resulting diptychs are coherent and contextually aligned, providing a strong foundation for fine-tuning EVLM.

Figure~\ref{fig:diptych_example} illustrates an example of such a diptych. The left panel represents the reference image, while the right panel reflects the transformation specified by the prompt. This technique is particularly effective when working with samples where textual input is minimal or absent, as it allows EVLM to focus on interpreting visual relationships directly. We have furnished some examples in Table~\label{tab:editing_examples} where we use two reference images.

Overall, the diptych approach not only simplifies the editing process but also improves the model's understanding of contextual relationships between the original and modified visual content. By avoiding dependency on detailed textual descriptions, this method provides a streamlined and efficient pathway for multimodal image editing.

    \begin{figure}[htb!] \centering \includegraphics[width=0.8\linewidth]{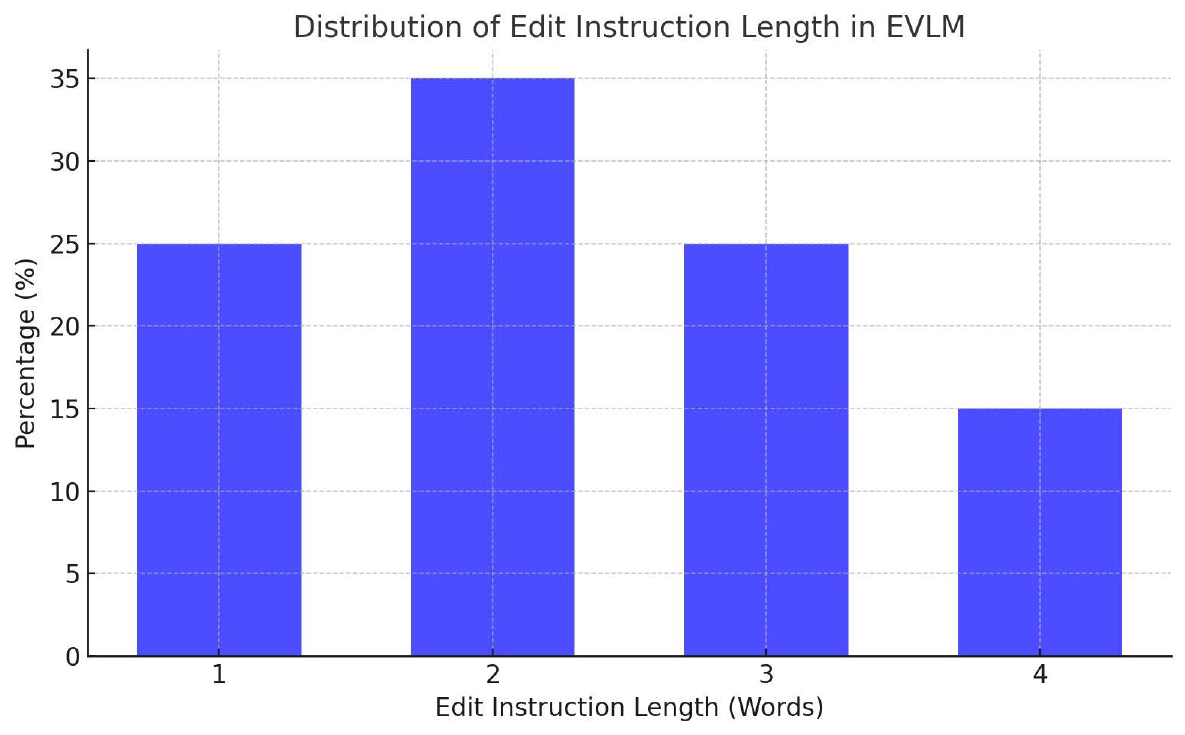}
    \caption{EVLM effectively performs editing tasks using concise instructions (1--4 words), whereas InstructPix2Pix requires longer and more detailed instructions (4 or more words).}
    \label{fig:edit_length_comparison}
\end{figure}

In addition to using DALL-E 3 for generating paired images, we further leverage Stable Diffusion XL to create a diverse set of high-quality image samples. These generated images undergo pre-processing to ensure compatibility and are combined with other open-source datasets such as IN2N, 3DEgo, DyNeRF, and MSCOCO~\cite{lin2014microsoft} to form reference-original pairs. This combination enriches the dataset with varied visual transformations and styles, enhancing the robustness of EVLM. For video-based pairs, we employ generative video models like CogVideoX-5B~\cite{yang2024cogvideox} and Stable Video Diffusion~\cite{blattmann2023stable} to synthesize reference videos, while the original videos are sourced from publicly available datasets, including Kinetics~\cite{kay2017kinetics} and Charades~\cite{sigurdsson2018charades}. This multi-source, multi-modality data preparation process ensures that the training data encapsulates a wide range of visual edits and transformations, providing EVLM with the capability to generalize across diverse scenarios.

\noindent \paragraph{Reference Text.} Figure~\ref{fig:edit_length_comparison} illustrates the distribution of instruction (Reference Text) lengths given as an input to EVLM. 8000/30000 samples have the reference text in addition to reference image or a video. EVLM's instructions, typically ranging from 1--4 words, demonstrate the model's ability to perform complex editing tasks with minimal and concise textual input, unlike InstructPix2Pix, which relies on more detailed instructions exceeding 4 words.
Figure~\ref{fig:edit_distribution} provides a detailed view of the types of edits and their frequency, showcasing the dataset's balanced coverage of diverse editing tasks.
We additionally created samples incorporating both textual and visual references, as illustrated in Table~\ref{tab:ambiguous_references}.

\begin{figure}[H]
    \centering
    \includegraphics[width=0.8\linewidth]{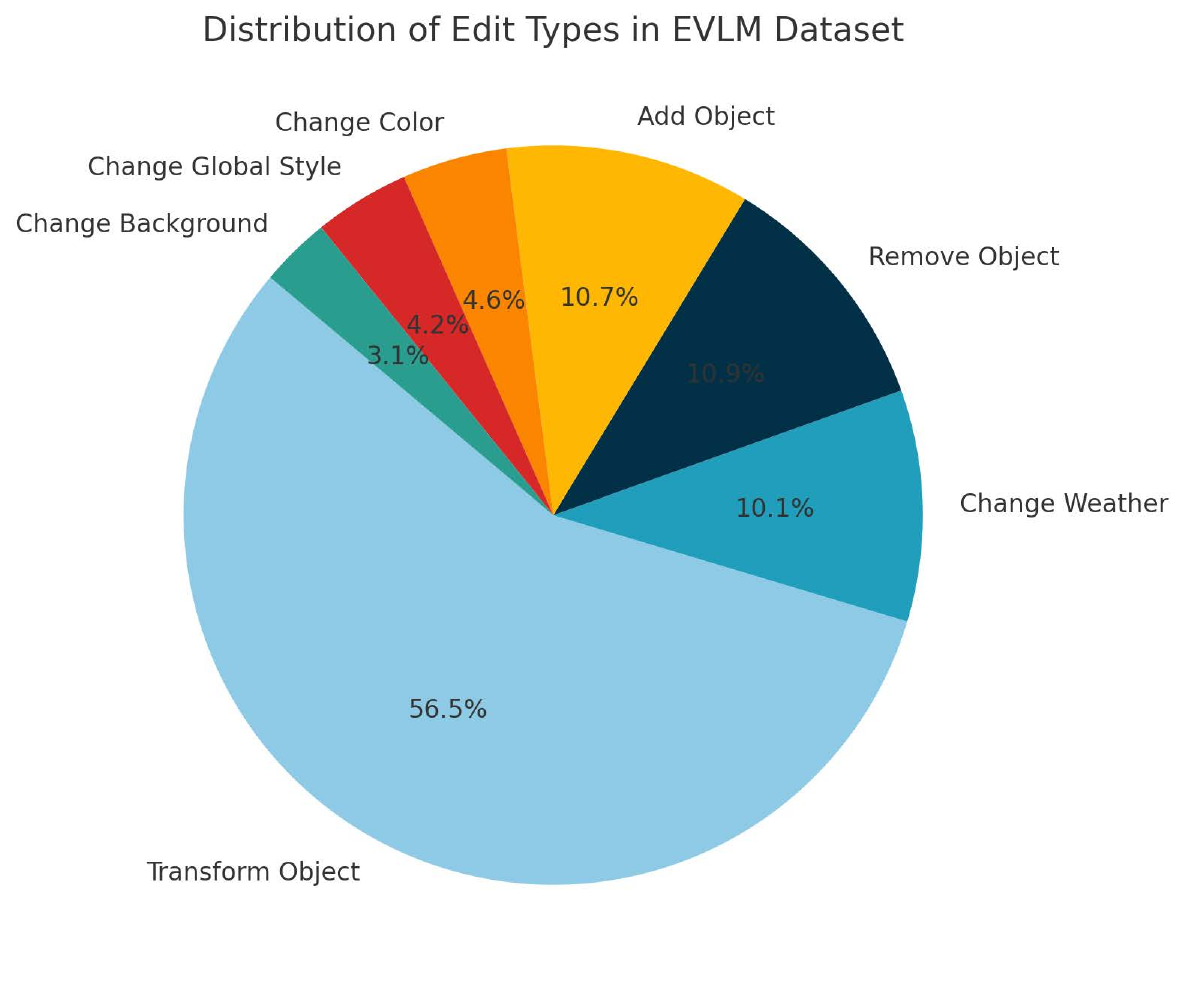}
    \caption{Distribution of Edit Types in the EVLM Dataset. The dataset covers diverse editing tasks, with the majority focusing on object transformations.}
    \label{fig:edit_distribution}
\end{figure}

\subsubsection{Data Quality Assessment}
To ensure high data quality, the dataset underwent the following refinements:
\begin{itemize}
    \item \textbf{Human Evaluation}: Human raters validated and refined GPT-4o-generated rationales and instructions to align with subjective editing preferences.
    \item \textbf{Instruction Clarity}: Editing instructions were enriched to include sufficient detail, distinguishing them from the vague instructions typically used in baseline models.
    \item \textbf{Balanced Task Representation}: The dataset was carefully curated to balance global edits (e.g., changing weather or background) and local edits (e.g., object transformation or removal), as illustrated in Figure~\ref{fig:edit_distribution}.
\end{itemize}

\subsubsection{Edit Diversity}
The dataset spans a wide range of editing tasks, encompassing both global and local transformations. As shown in Figure~\ref{fig:edit_distribution}, the majority of tasks involve object transformations (56.5\%), followed by weather changes (10.1\%), object removals (10.9\%), and object additions (10.7\%). The diversity of edit types ensures that EVLM can generalize across varied scenarios.

% \subsubsection{Comparison with Baselines}
% Compared to InstructPix2Pix, which uses minimalistic instructions, EVLM's dataset features instructions enriched with reflective reasoning and contextual detail. This distinction allows EVLM to achieve superior coherence and alignment with human expectations in complex editing tasks.

% \noindent The visualizations in Figures~\ref{fig:edit_distribution} and~\ref{fig:edit_length_comparison} demonstrate the richness and quality of the dataset, emphasizing its capability to support high-quality multimodal editing.

\noindent By integrating reflective reasoning, human evaluation, and diverse multimodal inputs, this dataset provides a robust foundation for training EVLM to generate accurate and human-aligned editing instructions.

\section{Additional Results}
\label{sec:add_results}

The figure~\ref{fig:EVLM+4D} demonstrates how EVLM processes the reference video and original video to generate context-aware editing instructions and execute the desired transformations. In this specific 4D editing example, the goal is to change the person’s clothing into a Batman costume while preserving the face, as the reference video does not depict a Batman face. EVLM interprets the required editing from the reference and original videos, outputs the object label (clothing), and passes it to Grounded-SAM, which generates precise masks for the targeted region. These masks ensure that only the clothing is affected during the editing process. The editing instruction is then fed into a diffusion model, which applies the required transformation to the clothing while maintaining the integrity of other visual attributes, such as the face. The edited frames are iteratively replaced in the original dataset, creating a modified dataset aligned with the desired edits. Training is continued on this edited dataset to refine the alignment of deformed 3D Gaussians with the edited frames, ensuring the model's enhanced capacity to generalize and align with complex transformations. 

For 3D editing, we do not utilize deformed Gaussians; instead, we employ the standard Gaussian Splatting technique~\cite{kerbl20233d}. Along with this supplementary document, we have provided the rendered videos for both 4D-editing and 3D-editing. Additionally, Figure~\ref{fig:3D_results} showcases a 3D editing example where the input to EVLM consists of four components: two images (Einstein and green threads), one text prompt (jacket), and multiple frames of a person's face. The model autonomously interprets the editing intent and generates the instruction: \textit{Turn his face into Einstein and turn his jacket into green}. This instruction can then be utilized with~\cite{brooks2023instructpix2pix} to perform the desired edits, as illustrated in the Figure \ref{fig:3D_results}.

%------------------------------------------------
\section{Ablation Studies}
\label{sec:ablations}

We analyze the sensitivity of EVLM to its main hyperparameters.

\subsection{Effect of reflection weight $\lambda_{\mathrm{ref}}$}
As shown in Table~\ref{tab:abl_lambda}, increasing $\lambda_{\mathrm{ref}}$ strengthens reflective reasoning until $\lambda_{\mathrm{ref}}\!>\!0.2$, where excessive emphasis on reflection marginally reduces instruction fidelity.

\begin{table}[h]
\centering
\small
\caption{Ablation on $\lambda_{\mathrm{ref}}$.}
\label{tab:abl_lambda}
\begin{tabular}{lccc}
\toprule
$\lambda_{\mathrm{ref}}$ & Accuracy (\%) & Reflection Q & IoU \\
\midrule
0.00 & 88.0 & 0.62 & 0.41 \\
0.05 & 90.5 & 0.68 & 0.46 \\
0.10 & 92.3 & 0.72 & 0.51 \\
0.20 & \textbf{94.1} & 0.78 & 0.58 \\
0.50 & 93.4 & 0.80 & 0.56 \\
\bottomrule
\end{tabular}
\end{table}

\subsection{Importance-weight cap $w_{\max}$}
Smaller caps stabilize training but slow convergence; larger caps risk variance spikes.  
Table \ref{tab:abl_wmax} shows $w_{\max}{=}10$ achieves the best trade-off.

\begin{table}[H]
\centering
\small
\caption{Ablation on $w_{\max}$.}
\label{tab:abl_wmax}
\begin{tabular}{lcc}
\toprule
$w_{\max}$ & Accuracy (\%) & Comment \\
\midrule
5 & 92.1 & conservative weighting \\
10 & \textbf{94.1} & default, stable \\
20 & 93.6 & slightly higher variance \\
\bottomrule
\end{tabular}
\end{table}
\begin{table}[H]
\centering
\small
\caption{Ablation on MC samples for importance-weight estimation.}
\label{tab:abl_mc}
\begin{tabular}{lcc}
\toprule
MC samples & Accuracy (\%) & Runtime × factor \\
\midrule
1 & 92.8 & 1.0 \\
2 & 93.9 & 1.3 \\
3 & \textbf{94.1} & 1.6 (default) \\
4 & 94.2 & 2.0 \\
8 & 94.3 & 3.8 \\
\bottomrule
\end{tabular}
\end{table}
\begin{table}[H]
\centering
\small
\caption{Surrogate vs REINFORCE for reflection reward.}
\label{tab:abl_reflect}
\begin{tabular}{lcc}
\toprule
Method & Accuracy (\%) & Note \\
\midrule
Surrogate-only & 93.8 & lower variance \\
Surrogate + REINFORCE (IoU) & \textbf{94.1} & best overall \\
REINFORCE-only & 92.9 & unstable \\
\bottomrule
\end{tabular}
\end{table}
\subsection{Monte-Carlo samples per example}
Table \ref{tab:abl_mc} shows that three samples strike an effective balance between accuracy and runtime.

\subsection{Reflection reward formulation}
Replacing the differentiable surrogate with REINFORCE increases variance and slows convergence.  
Combining both retains stability while preserving gradient signal for mask IoU (Table \ref{tab:abl_reflect}).

\subsection{Alternative objectives}
We further compared pure SFT, KTO, and RKTO (Table \ref{tab:kto_sft_comparison}).  
Results confirm that reflection-aware optimization yields the strongest alignment.

\begin{table}[H]
\centering
\footnotesize
\caption{Training-objective comparison (human evaluation).}
\label{tab:kto_sft_comparison}
\begin{tabular}{lc}
\toprule
Method & Accuracy (\%) \\
\midrule
EVLM-KTO & 75.1 \\
EVLM-KTO (2-step thinking) & 76.3 \\
EVLM-SFT & 67.2 \\
EVLM-SFT (2-step thinking) & 65.1 \\
\midrule
\textbf{EVLM-RKTO} & \textbf{94.1} \\
\bottomrule
\end{tabular}
\end{table}
% \begin{table*}[]
% \centering
% \caption{Contextual Alignment Comparison Across Editing Types Based on User Study}
% \label{tab:contextual_alignment}
% \begin{tabular}{c|c|c|c|c|c}
% \hline
% \textbf{Editing Type} & \textbf{EVLM} & \textbf{IP2P (2D)} & \textbf{TAV (Video)} & \textbf{IN2N (3D)} & \textbf{Instruct 4D-to-4D (4D)} \\ \hline
% 2D Editing            & \textbf{4.82} & 4.32               & --                   & --                 & --                              \\ \hline
% Video Editing         & \textbf{4.77} & --                 & 4.41                & --                 & --                              \\ \hline
% 3D Editing            & \textbf{4.65} & --                 & --                  & 4.38              & --                              \\ \hline
% 4D Editing            & \textbf{4.58} & --                 & --                  & --                 & 4.42                            \\ \hline

% \end{tabular}
% \end{table*}
\paragraph{Discussion.}
Across all ablations, the reflection weight $\lambda_{\mathrm{ref}}$ and surrogate-based reflection reward have the largest impact.  
Moderate reflection emphasis (0.2) ensures coherent reasoning without over-regularizing the generation policy.  
RKTO thus balances reasoning depth and instruction fidelity, producing interpretable and precise edits.
\begin{table}[H]
\centering
\small
\caption{Contextual alignment from user study.}
\label{tab:contextual_alignment}
\begin{tabular}{lcc}
\toprule
Editing Type & EVLM & Best Baseline \\
\midrule
2D Editing & \textbf{4.82} & IP2P: 4.32 \\
Video Editing & \textbf{4.77} & Tune-A-Video: 4.41 \\
3D Editing & \textbf{4.65} & IN2N: 4.38 \\
4D Editing & \textbf{4.58} & Instruct 4D-to-4D: 4.42 \\
\bottomrule
\end{tabular}
\end{table}

%------------------------------------------------

\section{User study}
\label{sec:user}

To evaluate the contextual alignment of EVLM compared to other methods, we conducted a user study involving 50 participants aged between 20 and 40. Each participant evaluated the contextual alignment of editing outputs on a scale of 1 to 5 across four editing types: 2D, video, 3D, and 4D editing. Participants were presented with pairs of reference inputs (text, images, or videos) and their corresponding edited outputs and were tasked with rating how well the edits aligned with the intended context. The study involved a total of 200 edits, with 50 edits per task, and 20 randomly selected edits were assigned to each participant for evaluation.

The Table~\ref{tab:contextual_alignment} presents the average scores across all users for each method. EVLM consistently outperformed competing methods (IP2P for 2D editing, TAV for video editing, IN2N for 3D editing, and Instruct 4D-to-4D for 4D editing) across all editing types. However, no method achieved a perfect 5.0 score, reflecting the subjective nature of the task and variability in user preferences. The results demonstrate EVLM's superior ability to understand and apply contextual cues, particularly in complex multimodal scenarios like 3D and 4D editing.

%------------------------------------------------
\section{Limitations }
\label{sec:limitations}

While EVLM demonstrates significant advancements in generating context-aware editing instructions, it has several limitations. The model heavily relies on the quality and diversity of its training dataset, which, despite comprising 30,000 samples, may not generalize well to highly complex or unseen scenarios. Its dependency on human-rated rationales introduces subjectivity, and its computationally intensive reflective reasoning framework and KL-Divergence Target Optimization (KTO) limit scalability for large-scale or real-time applications. Furthermore, EVLM struggles with ambiguous textual references and conflicting multimodal inputs, and it primarily supports object-specific edits, making it less effective for global scene-level transformations. The precision of mask generation by Grounded-SAM impacts the quality of detailed edits, and the lack of dynamic or incremental training reduces adaptability to new styles or user preferences. Additionally, EVLM is restricted to text and vision modalities, excluding other forms of input such as audio, which limits its applicability in broader multimodal contexts. These limitations provide opportunities for future improvements in dataset diversity, scalability, and multimodal adaptability.

% --- end supplementary ---
\end{document}